\documentclass[11pt]{article}

\usepackage[utf8]{inputenc}
\usepackage[T1]{fontenc}
\usepackage{microtype}
\usepackage{graphicx}
\usepackage{subcaption}
\usepackage{booktabs} 
\usepackage{hyperref}
\usepackage{multirow}
\usepackage{float}
\usepackage{stfloats}
\usepackage[square]{natbib}
\usepackage{enumitem}
\usepackage{amsmath}
\usepackage{amssymb}
\usepackage{mathtools}
\usepackage{amsthm}
\usepackage[capitalize,noabbrev]{cleveref}
\usepackage{bm}
\usepackage{algorithm}
\usepackage{algorithmic}

\theoremstyle{plain}
\newtheorem{theorem}{Theorem}[section]

\theoremstyle{definition}

\theoremstyle{remark}

\usepackage[margin=1in]{geometry}
\usepackage{authblk} 

\title{\textbf{NeuralFLoC: Neural Flow-Based Joint Registration and Clustering of Functional Data}}

\author[1]{Xinyang Xiong\thanks{These authors contributed equally.}}
\author[1]{Siyuan Jiang\makeatletter\protect\footnotemark[1]\makeatother}
\author[1]{Pengcheng Zeng \thanks{Corresponding author:zengpch@shanghaitech.edu.cn}}

\affil[1]{Institute of Mathematical Sciences, ShanghaiTech University, Shanghai, China}

\date{} 

\begin{document}

\maketitle

\begin{abstract}
Clustering functional data in the presence of phase variation is challenging, as temporal misalignment can obscure intrinsic shape differences and degrade clustering performance. Most existing approaches treat registration and clustering as separate tasks or rely on restrictive parametric assumptions. We present \textbf{NeuralFLoC}, a fully unsupervised, end-to-end deep learning framework for joint functional registration and clustering based on Neural ODE-driven diffeomorphic flows and spectral clustering. The proposed model learns smooth, invertible warping functions and cluster-specific templates simultaneously, effectively disentangling phase and amplitude variation. We establish universal approximation guarantees and asymptotic consistency for the proposed framework. Experiments on functional benchmarks show state-of-the-art performance in both registration and clustering, with robustness to missing data, irregular sampling, and noise, while maintaining scalability. Code is available at \url{https://anonymous.4open.science/r/NeuralFLoC-FEC8}.
\end{abstract}

\section{Introduction}
Functional Data Analysis (FDA) provides a principled statistical framework for analyzing data that are naturally viewed as realizations of smooth functions or curves, such as biological signals, financial trajectories, and motion paths \citep{ramsay2005functional, Ferr2006}. Despite its broad applicability, FDA poses fundamental methodological challenges due to the infinite-dimensional nature of functional data, as well as practical issues such as smoothness, noise, and misalignment \citep{Wang2015, Matu2022}. A core challenge in addressing misalignment stems from the coexistence of two distinct sources of variability: \emph{amplitude variation}, reflecting differences in vertical magnitude, and \emph{phase variation}, corresponding to horizontal or temporal distortions of salient features \citep{srivastava2011registration}.

\subsection{Motivation}
In this work, we focus on the challenging problem of clustering functional data in the presence of phase variation. Specifically, we consider observations of the form
\[
x(t) = x_0\big(\gamma^{-1}(t)\big) + \epsilon(t),
\]
where $x_0(t)$ denotes an underlying latent function (with curve-specific amplitude scaling), $\gamma(t)$ is an unknown time-warping function, and $\epsilon(t)$ represents noise. Following classical FDA formulations \citep{ramsay2005functional}, we assume the warping function $\gamma$ is a non‑decreasing, continuous map on a compact domain $\mathcal{D} = [a,b]$ with boundary conditions $\gamma(a)=a$ and $\gamma(b)=b$. Without loss of generality, we take $\mathcal{D} = [0,1]$.

Clustering such data is essential for uncovering latent functional patterns and heterogeneity. However, substantial phase variation can obscure the intrinsic shape differences between curves, causing conventional clustering algorithms to group functions according to misalignment rather than underlying structure, thereby leading to degraded performance \citep{marron2015functional}.

Existing approaches to this problem largely fall into two categories.  
The first strategy treats registration (alignment) and clustering as two separate steps \citep{sangalli2010k}. A typical pipeline first estimates warping functions $\gamma(t)$'s—often by aligning curves to a global template using Dynamic Time Warping (DTW) \citep{sakoe1978dynamic} or the Square Root Velocity Function (SRVF) framework \citep{srivastava2011registration}—and then applies standard clustering algorithms such as $K$-means \citep{tucker2013generative} on the aligned $x(\gamma(t))$'s. While intuitive, this two-stage approach suffers from several drawbacks:  
(1) the alignment step depends on a pre-specified or data-dependent template, which may introduce bias;  
(2) registration errors propagate to the clustering stage;  
(3) many elastic registration methods are computationally intensive or fail to guarantee monotonicity and invertibility of the warping functions, properties that are essential for preserving the physical topology of time \citep{vercauteren2009diffeomorphic}.

The second strategy seeks to jointly perform registration and clustering within a unified statistical framework \citep{liu2009simultaneous, wu2016bayesian, zeng2019simultaneous, gaffney2004joint}. Despite their conceptual appeal, existing joint models exhibit notable limitations:  
(1) they often impose restrictive assumptions on the warping functions, such as simple time shifts \citep{liu2009simultaneous}, Dirichlet process priors \citep{wu2016bayesian}, or parametric mixed-effects structures \citep{zeng2019simultaneous};  
(2) they are difficult to extend to multivariate or vector-valued functional data;  
(3) they rely on computationally expensive inference procedures, limiting scalability to large datasets.

To address these challenges, we propose \textbf{NeuralFLoC}, a deep learning framework for joint diffeomorphic registration and spectral clustering of functional data using neural flows. The method ensures smooth, invertible transformations without explicit distributional assumptions, scales efficiently, and naturally extends to multivariate settings. To our knowledge, this is the first end-to-end deep learning approach to simultaneous functional registration and clustering.

\subsection{Related Work}

\paragraph{Curve Registration and the SRVF Framework.}
Curve registration aims to remove phase variability by estimating optimal warping functions. The Square Root Velocity Function (SRVF) framework~\citep{srivastava2011registration} is a landmark contribution, as it transforms the Fisher--Rao Riemannian metric into a standard $\mathbb{L}^2$ metric, enabling efficient elastic shape analysis. Nevertheless, traditional SRVF-based methods typically rely on dynamic programming or iterative optimization, which can be computationally demanding \citep{srivastava2016functional}. More recently, neural network-based approaches have been explored \citep{Chen2021}, but these are often employed as preprocessing steps and remain decoupled from downstream learning objectives.

\paragraph{Neural Ordinary Differential Equations.}
Neural Ordinary Differential Equations (Neural ODEs) model hidden representations as continuous-time flows parameterized by neural networks \citep{chen2018neural}. They have demonstrated strong performance in time-series modeling and generative learning \citep{rubanova2019latent}. In this work, we leverage Neural ODEs to construct smooth, invertible, and biologically plausible time-warping transformations for functional data registration, drawing connections to continuous normalizing flows and diffeomorphic mappings \citep{dupont2019augmented}.

\paragraph{Functional Clustering.}
Functional clustering seeks to group curves according to shared structural characteristics. Classical approaches cluster basis coefficients obtained from spline or wavelet expansions \citep{ramsay2005functional}. More recent developments incorporate deep representation learning, such as Deep Embedding Clustering (DEC) \citep{xie2016unsupervised}. However, most existing methods implicitly assume that functional observations are already aligned or focus exclusively on amplitude variation, limiting their effectiveness when pronounced phase variability is present \citep{marron2015functional}.

\subsection{Contributions}
Our main contributions are as follows:
\begin{itemize}[leftmargin=*, noitemsep, topsep=2pt]
    \item \emph{A continuous diffeomorphic registration model} based on Neural ODEs, which guarantees smooth, invertible warping functions by construction.
    \item \emph{A fully unsupervised deep learning framework that jointly aligns functions and infers clusters} by integrating SRVF-based registration with spectral clustering, separating phase and amplitude variation through cluster-specific templates.
    \item \emph{Theoretical guarantees}---including universal approximation and consistency---and \emph{a scalable optimization procedure} with linear-time sample complexity and constant memory usage, empirically validated on diverse functional datasets.
\end{itemize}

\section{Preliminaries}

\paragraph{Warping Space and Diffeomorphisms.}
To represent non-linear temporal misalignments, we consider the space of warping functions $\Gamma$. Following \citep{ramsay2005functional}, a valid warping function must preserve the temporal topology. We usually define the warping space $\Gamma$ as the set of all positive boundary-preserving diffeomorphisms on the unit interval $[0, 1]$:
\begin{equation}
    \Gamma = \{ \gamma: [0, 1] \to [0, 1] \mid \gamma(0)=0, \gamma(1)=1, \dot{\gamma} > 0 \}.
    \label{equ:warping_space}
\end{equation}
The condition $\dot{\gamma} > 0$ ensures that $\gamma$ is strictly monotonic and thus invertible, which is a fundamental requirement for meaningful time-warping in FDA \citep{marron2015functional}.

\paragraph{Neural ODEs.}
A Neural ODE models the evolution of a latent state $\mathbf{h}(t)$ as a continuous-time dynamical system \citep{chen2018neural}. The dynamics are governed by a vector field parameterized by a neural network $f(\mathbf{h}(t), t, \theta)$:
\begin{equation}
    \frac{d\mathbf{h}(t)}{dt} = f(\mathbf{h}(t), t, \theta), \quad \mathbf{h}(0) = \mathbf{h}_0.
    \label{equ:neural_ode}
\end{equation}
The solution at time $t$ can be obtained via an ODE solver: $\mathbf{h}(t) = \mathbf{h}_0 + \int_0^t f(\mathbf{h}(s), s, \theta) ds$. In our framework, we leverage the continuous flow of Neural ODEs to parameterize the warping function $\gamma(t)$, ensuring smoothness and strict monotonicity through the choice of the vector field $f$ \citep{rubanova2019latent}.

\paragraph{SRVF.}
To achieve alignment that is invariant to re-parameterization, we adopt the SRVF framework \citep{srivastava2011registration}. For a smooth function $x: [0, 1] \to \mathbb{R}$, its SRVF is defined as:
\begin{equation}
    q(t) = \text{sign}(\dot{x}(t))\sqrt{|\dot{x}(t)|}.
    \label{equ:srvf}
\end{equation}
The primary advantage of SRVF is that the Fisher-Rao Riemannian metric on the space of functions is transformed into the standard $\mathbb{L}^2$ metric in the SRVF space \citep{srivastava2016functional}. Specifically, for any warping function $\gamma \in \Gamma$, the SRVF of the warped function $x(\gamma(t))$ is given by $(q \circ \gamma)(t)\sqrt{\dot{\gamma}(t)}$. This isometry allows us to compute the registration loss using the standard Euclidean distance in the SRVF space while maintaining its geometric significance.

\section{Methodology}
Consider functional observations $\{x_i(t)\}_{i=1}^N$, each sampled at $T$ time points, $\mathbf{x}_i = (x_i(t_{i1}),\dots,x_i(t_{iT}))$. Such data often exhibit significant phase variation (Figure~\ref{fig:intro}(a)), which can obscure shape differences and hinder direct clustering \citep{ramsay2005functional}. Our goal is to jointly recover distinct shape patterns and nonlinear time warpings $\gamma_i \in \Gamma$ that align similar curves. To this end, we estimate optimal warping functions $\{\gamma_i\}$—parameterized via a Neural ODE—and cluster assignments by minimizing a synergistic objective. This joint formulation allows registration to be guided by cluster-specific templates, while clustering operates on phase-corrected representations \citep{marron2015functional}.

The proposed framework, \textbf{NeuralFLoC}, consists of three integrated modules: a 1D-CNN encoder for feature extraction, a Neural ODE block for continuous-time diffeomorphic warping, and a Fourier-based soft-assignment clustering layer (Figure~\ref{fig:system_overview}).

\begin{figure*}[ht]
    \centering
    \includegraphics[width=0.95\linewidth]{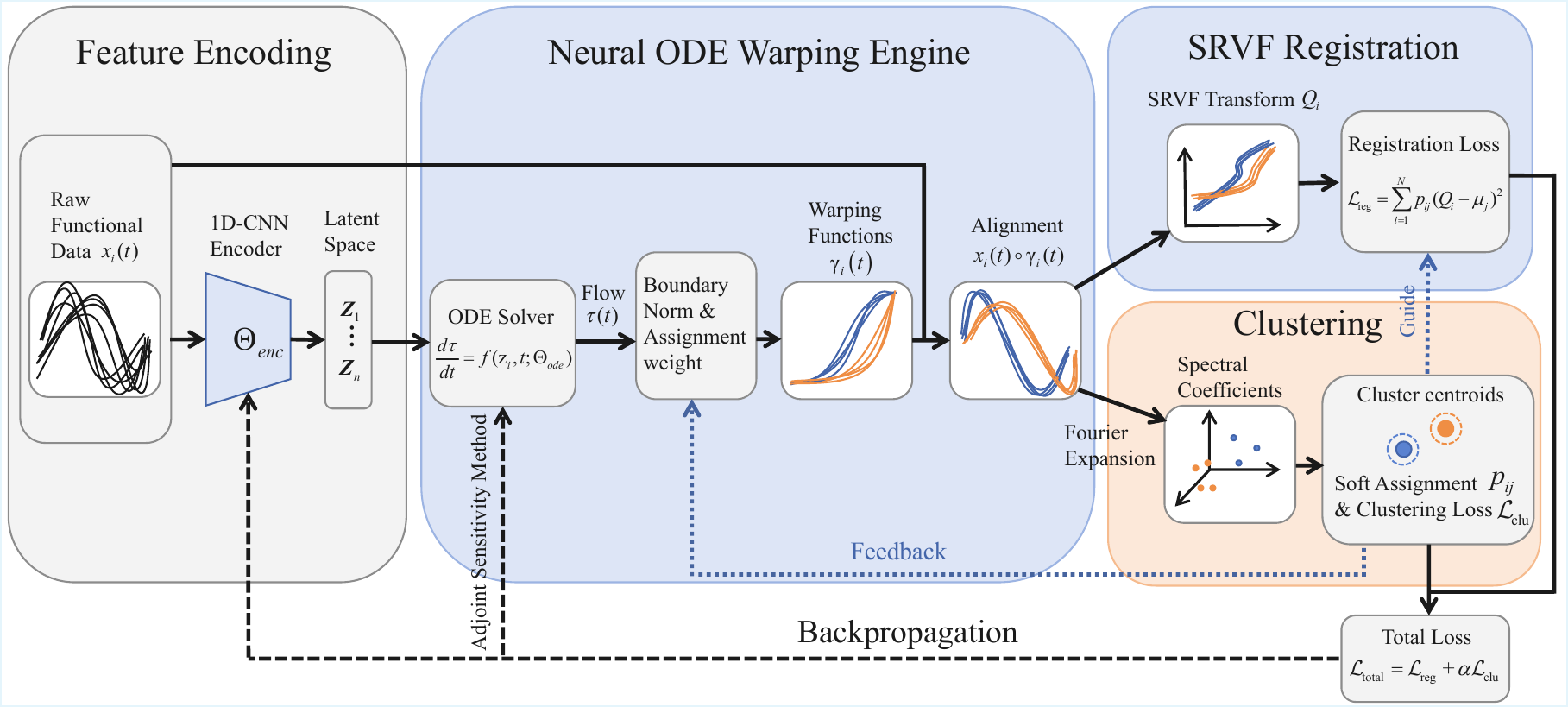}
    \caption{Overview of \textbf{NeuralFLoC}. A 1D-CNN encodes raw functional data. A Neural ODE generates smooth, invertible warping functions $\gamma_i(t)$ to align curves, and the aligned features are clustered via a Fourier-based soft-assignment module. The model is trained end-to-end for joint registration and clustering.}
    \label{fig:system_overview}
\end{figure*}

\subsection{Latent Initialization via 1D-CNN}
\label{subsec:cnn_encoder}

1D convolutional neural networks (1D-CNNs) effectively capture local morphological patterns in functional data \citep{kiranyaz2015real,jiang2025deepfrc}. Our encoder, parameterized by $\Theta_{\text{enc}}$, defines a nonlinear mapping $\Phi: \mathcal{X} \to \mathcal{Z}$ that projects a raw sequence $\mathbf{x}_i \in \mathbb{R}^{1 \times T}$ into a latent space $\mathcal{Z} \in \mathbb{R}^C$, where $C$ denotes the number of clusters (see section \ref{sec:discussion} on how to determine $C$). The latent vector $\mathbf{z}_i$ is computed as:
\begin{equation}
    \mathbf{z}_i = \text{ReLU}\!\left(W \,\text{Enc}(\mathbf{x}_i; \Theta_{\text{enc}}) + \mathbf{b}\right),
    \label{eq:encoder}
\end{equation}
where $\text{Enc}(\cdot)$ denotes the 1D-CNN layers, and $W, \mathbf{b}$ are parameters of a final fully-connected layer. This $\mathbf{z}_i$ serves as the initial condition for the subsequent dynamics, ensuring that alignment begins from a shape-aware representation.

\subsection{Continuous Time-Warping via Neural ODEs}
\label{subsec:neural_ode_warper}

The core alignment is performed by a Neural ODE warper that models time-warping as a continuous flow $\tau_i(t)$ governed by a learnable velocity field $f$. To guarantee diffeomorphic warps in $\Gamma$, we solve the initial value problem (IVP) \citep{chen2018neural}:
\begin{equation}
    \frac{d\tau_{ij}(t)}{dt} = f\!\left(\tau_{ij}(t), t, {z}_{ij}; \Theta_{\text{ode}}\right), \quad \tau_{ij}(0) = {z}_{ij},
    \label{eq:ivp}
\end{equation}
where $f$ is a multi-layer perceptron, and $z_{ij}$ is the $j$-th element of $\bm{z}_i$. A Softplus activation is applied to the output of $f$ to enforce strict monotonicity ($\dot{\gamma}>0$), as required for valid warping \citep{srivastava2016functional}. The normalized warping function is obtained via boundary scaling:
\begin{equation}
    \hat{\gamma}_{ij}(t) = \frac{\tau_{ij}(t) - \tau_{ij}(0)}{\tau_{ij}(1) - \tau_{ij}(0)}.
    \label{eq:warp_norm}
\end{equation}
To handle multiple shape modes, the instance-specific warping is defined as a mixture of class-specific flows weighted by assignment probabilities $p_{ij}$:
\begin{equation}
    \gamma_i(t) = \sum_{j=1}^C p_{ij} \, \hat{\gamma}_{ij}(t).
    \label{eq:instance_warping}
\end{equation}
Initial assignments $\{\mathbf{p}_i\}=\{p_{i1},..., p_{iC}\}$ are set uniformly. Note that the vector $\bm{z}_i \in R^C $(Eq. \ref{eq:encoder}) is not a free latent embedding: its $j$-th coordinate initializes the $j$-th cluster-specific ODE flow whose mixture forms the final warp (Eq. \ref{eq:instance_warping}). Thus, the dimensionality is structural rather than a representational bottleneck: the 1D-CNN encoder itself has full expressive power, while the final linear layer simply reorganizes its outputs into the flow-related C-dimensional structure.The aligned function $\tilde{x}_i(t) = x_i(\gamma_i(t))$ is computed using 1D linear interpolation, which ensures stable gradient flow during backpropagation. The complete procedure is outlined in Algorithm~\ref{alg:time_warping}.

\begin{algorithm}[tb]
\caption{Time-warping Module}
\label{alg:time_warping}
\begin{algorithmic}[1]
   \STATE {\bfseries Input:} Latent vectors $\{\mathbf{z}_i\}$, ODE parameters $\Theta_{\text{ode}}$, Assignments $\{\mathbf{p}_i\}$.
   \STATE Solve IVP using Adjoint Method: $\tau_i(t) = \text{ODESolver}(\mathbf{z}_i, t, \Theta_{\text{ode}})$.
   \STATE Compute boundary-normalized warping $\hat{\gamma}_i(t)$ via Eq. \ref{eq:warp_norm}.
   \STATE Compute instance-specific warping $\gamma_i(t) = \sum p_{ij} \hat{\gamma}_{ij}(t)$.
   \STATE Apply $\gamma_i(t)$ to $x_i(t)$ via linear interpolation to get $\tilde{x}_i(t)$.
   \STATE {\bfseries Output:} Warping functions $\{\gamma_i\}$, Aligned curves $\{\tilde{x}_i\}$.
\end{algorithmic}
\end{algorithm}

\subsection{Clustering Assignment in Spectral Domain}
\label{subsec:spectral_clustering}

We perform clustering in a low-dimensional spectral domain by projecting aligned curves onto a Fourier basis and applying a differentiable soft-assignment mechanism.

\paragraph{Fourier Representation.}
The aligned data $\tilde{x}_i(t)$ is approximated using a Fourier basis expansion $\{\phi_k(t)\}_{k=1}^K$:
\begin{equation}
    \tilde{x}_i(t) \approx \sum_{k=1}^K a_{ik} \phi_k(t),
\end{equation}
where $\mathbf{a}_i \in \mathbb{R}^K$ are the expansion coefficients (default $K=10$). This projection filters high-frequency noise while preserving essential shape features \citep{ramsay2005functional} and functions analogously to spectral embedding in traditional spectral clustering.

\paragraph{Soft Assignment.}
Similarity between coefficient vector $\mathbf{a}_i$ and cluster centroids $\{\mathbf{c}_j\}_{j=1}^C$ is measured via a Student's $t$-distribution kernel \citep{xie2016unsupervised}:
\begin{equation}
    p_{ij} = \frac{(1 + \|\mathbf{a}_i - \mathbf{c}_j\|^2 / \epsilon_0)^{-\frac{\epsilon_0+1}{2}}}
                  {\sum_{j'} (1 + \|\mathbf{a}_i - \mathbf{c}_{j'}\|^2 / \epsilon_0)^{-\frac{\epsilon_0+1}{2}}},
    \label{eq:t_dist}
\end{equation}
with $\epsilon_0 = 1$. The resulting soft-assignment probability $p_{ij}$ is fed back into the warping module (Eq.~\ref{eq:instance_warping}), enabling the model to refine alignments using current cluster assignments. Note that the cluster centroids $\mathbf{C}$ are also learnable parameters, initialized via K‑means on the Fourier coefficients of the raw curves and subsequently optimized via the joint loss (Eq.~\ref{eq:joint_loss}).

\subsection{Objective Function Formulation}

We propose a unified objective that jointly learns diffeomorphic temporal alignments and cluster assignments, enabling class-specific functional registration while preserving discriminative shape information.

\paragraph{Cluster-Conditional SRVF Registration.}
To measure alignment quality under a reparameterization-invariant geometry, we adopt the square-root velocity function (SRVF) representation \citep{srivastava2011registration}. For a warped curve $\tilde{x}_i(t)=x_i(\gamma_i(t))$, its SRVF is
\[
Q_i(t)=\mathrm{sign}(\dot{\tilde{x}}_i(t))\sqrt{|\dot{\tilde{x}}_i(t)|}.
\]
Instead of aligning all curves to a global template, we introduce cluster-conditional alignment targets. For each cluster $j$, we define a weighted Karcher mean in SRVF space,
\[
\mu_j=\frac{\sum_{k=1}^N p_{kj}Q_k}{\sum_{k=1}^N p_{kj}},
\]
and formulate the registration loss as the expected within-cluster dispersion,
\begin{equation}
\mathcal{L}_{\mathrm{reg}}
=\sum_{i=1}^N\sum_{j=1}^C p_{ij}\,\|Q_i-\mu_j\|_2^2 .
\label{eq:reg_loss}
\end{equation}
This encourages class-aware alignment and effectively disentangles phase and amplitude variation.

\paragraph{Self-Paced Clustering.}
Cluster assignments are learned via soft probabilities $p_{ij}$ and refined using a self-training objective inspired by deep embedded clustering (DEC) \citep{xie2016unsupervised}. The target distribution
\[
q_{ij}=\frac{p_{ij}^2/g_j}{\sum_{j'}p_{ij'}^2/g_{j'}},\qquad
g_j=\sum_i p_{ij},
\]
emphasizes confident assignments and avoids degenerate solutions. The clustering loss is defined as
\begin{equation}
\mathcal{L}_{\mathrm{clu}}
=\mathrm{KL}(Q\|P)
=\sum_{i=1}^N\sum_{j=1}^C q_{ij}\log\frac{q_{ij}}{p_{ij}} .
\label{eq:clu_loss}
\end{equation}

\paragraph{Joint Objective.}
The final end-to-end objective is
\begin{equation}
\mathcal{L}_{\mathrm{total}}
=\mathcal{L}_{\mathrm{reg}}+\alpha\,\mathcal{L}_{\mathrm{clu}},
\label{eq:joint_loss}
\end{equation}
where $\alpha$ balances geometric alignment and cluster discrimination (default $\alpha=0.01$). The two terms are optimized jointly, allowing clustering to guide class-specific registration while improved alignment yields shape-pure representations for more accurate clustering.

\subsection{Optimization, Complexity, and Extension}
\label{subsec:complexity}

The model parameters consist of three groups: encoder weights $\Theta_{\text{enc}}$, ODE parameters $\Theta_{\text{ode}}$, and cluster centroids $\mathbf{C}$. Gradients of the loss w.r.t. $\Theta_{\text{ode}}$ are computed via the adjoint sensitivity method \citep{chen2018neural}, enabling backpropagation through the ODE solver with $O(1)$ memory overhead. The full training loop is outlined in Algorithm~\ref{alg:training}.

\begin{algorithm}[tb]
\caption{Joint Registration and Clustering Training}
\label{alg:training}
\begin{algorithmic}[1]
   \STATE {\bfseries Input:} Data $\{x_i\}$, Cluster number $C$, Weight $\alpha$.
   \STATE {\bfseries Initialize:} $\Theta_{\text{enc}}, \Theta_{\text{ode}}$, and centroids $\mathbf{C}$ via K-means on raw Fourier coefficients.
   \WHILE{not converged}
       \STATE Calculate latent features $\mathbf{z}_i$ via Eq. \ref{eq:encoder}.
       \STATE $\{\gamma_i\}, \{\tilde{x}_i\} = \text{Time-warping}(\{\mathbf{z}_i\}, \Theta_{\text{ode}}, \{\mathbf{p}_i\})$.
       \STATE Update soft assignments $p_{ij}$ and target $q_{ij}$.
       \STATE Compute $\mathcal{L}_{\text{total}} = \mathcal{L}_{\text{reg}} + \alpha \mathcal{L}_{\text{clu}}$.
       \STATE Update parameters using Adam and Adjoint Method gradients.
   \ENDWHILE
   \STATE {\bfseries Output:} Optimal warping functions $\{\gamma_i\}$ and assignments $\mathbf{p}_i$.
\end{algorithmic}
\end{algorithm}

\textbf{Complexity analysis.} The framework achieves efficiency through three modular steps: (1) 1D-CNN encoding runs in $O(N \cdot T)$; (2) Neural ODE warping requires $T$ function evaluations while maintaining $O(1)$ memory via the adjoint method; (3) joint optimization computes the SRVF registration loss in $O(N \cdot C \cdot T)$ and the spectral clustering loss in $O(N \cdot C \cdot K)$, where $C$ is the number of clusters and $K$ the Fourier basis size. The total per-iteration complexity is $O\big(N(CK + CT + T)\big)$, scaling linearly with sample size $N$. This represents a substantial improvement over traditional $O(T^2)$ dynamic programming methods \citep{sakoe1978dynamic}, particularly when $T \gg N$.

\textbf{Extension to multivariate functional data.} Our framework naturally handles $d$-variate inputs ($d \geq 2$) by taking $\mathbf{x}_i \in \mathbb{R}^{d \times T}$ and assuming a shared warping process across dimensions, without introducing additional structural assumptions.

\section{Theoretical Analysis}
We next establish the theoretical foundation of our joint framework by presenting two theorems on the approximation consistency of the neural registration model and the consistency of the joint registration-clustering estimator. Detailed proofs are provided in Appendix~\ref{App:thm_proof}.

\begin{theorem}[Approximation Consistency of Neural Registration]
\label{thm:approx}
Let $\Gamma$ denote the space of boundary-preserving diffeomorphisms on $[0,1]$.
Assume that:

\begin{enumerate}[label=(A\arabic*)]
    \item The encoder class $\{\Phi(\cdot;\Theta_{\mathrm{enc}})\}$ (1D-CNNs) is dense in $C(\mathcal{X},\mathbb{R}^d)$ on compact subsets of the input space $\mathcal{X}$.
    \item The Neural ODE vector field class $\{f(\cdot,\cdot,\cdot;\Theta_{\mathrm{ode}})\}$ is dense in the space of continuous and uniformly Lipschitz functions on $[0,1]\times[0,1]\times\mathbb{R}^d$.
    \item The ground-truth warping functions $\gamma_i^*$ are generated by an ODE with a Lipschitz continuous velocity field and satisfy $\gamma_i^*\in\Gamma$.
    \item Each observed curve $x_i$ is absolutely continuous with bounded derivative, so that its SRVF $q_i$ exists and is continuous.
\end{enumerate}

Then, for any $\varepsilon>0$, there exist parameters $(\Theta_{\mathrm{enc}},\Theta_{\mathrm{ode}})$ such that the induced warping functions $\hat{\gamma}_i$ satisfy
\[
\big| R(\gamma^*) - R(\hat{\gamma}) \big| < \varepsilon,
\]
where $R(\gamma)$ denotes the total SRVF-based registration loss.
\end{theorem}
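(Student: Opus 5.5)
The argument is a chain of approximations: encoder, then flow, then normalized warp, then warped curve, then SRVF, and finally the loss $R$. First I fix what $R$ means. For fixed soft assignments $p_{ij}$ (or assignments held at their limiting values), take
\[
R(\gamma)=\sum_{i=1}^N\sum_{j=1}^C p_{ij}\,\|Q_i^\gamma-\mu_j^\gamma\|_2^2 ,
\qquad Q_i^\gamma=(q_i\circ\gamma_i)\sqrt{\dot\gamma_i},
\]
so that $\mu_j^\gamma$ is a weighted average of the $Q_k^\gamma$. Since $N$ and $C$ are finite, $R$ is a continuous function of the vector $(Q_1^\gamma,\dots,Q_N^\gamma)\in(\mathbb{L}^2)^N$. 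On bounded sets it is in fact locally Lipschitz, with $|\|a\|^2-\|b\|^2|\le(\|a\|+\|b\|)\|a-b\|$. It therefore suffices to show that for every $\eta>0$ we can make $\|Q_i^{\hat\gamma}-Q_i^{\gamma^*}\|_2<\eta$ for all $i$ at once.

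Next I reduce the SRVF error to an error on the warps, using (A4). Write
\[
Q_i^{\hat\gamma}-Q_i^{\gamma^*}=(q_i\circ\hat\gamma_i-q_i\circ\gamma_i^*)\sqrt{\dot{\hat\gamma}_i}+q_i\circ\gamma_i^*\bigl(\sqrt{\dot{\hat\gamma}_i}-\sqrt{\dot\gamma_i^*}\bigr).
\]
Because $q_i$ is continuous on the compact set $[0,1]$, it is bounded and uniformly continuous, so the first term is small once $\|\hat\gamma_i-\gamma_i^*\|_\infty$ is small and $\dot{\hat\gamma}_i$ is bounded. For the second term, $|\sqrt a-\sqrt b|\le\sqrt{|a-b|}$ shows it is controlled by $\|\dot{\hat\gamma}_i-\dot\gamma_i^*\|_\infty$. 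So the real target is $C^1$ closeness: $\|\hat\gamma_i-\gamma_i^*\|_{C^1}<\delta$ for every $i$.

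The $C^1$ approximation comes from (A3), (A2) and (A1) together. By (A3), $\gamma_i^*$ solves $\dot\gamma=v_i(\gamma,t)$ with Lipschitz $v_i$. I would realise these as one field of the form $f^*(\tau,t,z)$, after choosing distinct initial codes $z_i$ for the finitely many curves, for instance $z_i=i/N$ before any affine rescaling that (A2)'s domain requires. Then I approximate the parameters in two steps.
\begin{itemize}
\item By (A1), pick $\Theta_{\mathrm{enc}}$ so that $\Phi(x_i)$ is within $\delta'$ of the chosen codes; the points $x_i$ form a finite, hence compact, set.
\item By (A2), pick $\Theta_{\mathrm{ode}}$ with $\|f-f^*\|_\infty<\delta'$ and a uniform Lipschitz constant $L$.
\end{itemize}
Grönwall's inequality then gives $\|\tau_i-\tau_i^*\|_\infty\le(\delta'+L\delta')e^{L}$. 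Uniform closeness of $f$ also gives closeness of $\dot\tau_i=f(\tau_i,t,z_i)$. The normalization in \eqref{eq:warp_norm} is a smooth map of $(\tau_i(\cdot),\tau_i(0),\tau_i(1))$ as long as the denominator stays away from zero, and strict positivity of $\dot\gamma^*$ guarantees that. So $C^1$ closeness survives normalization, and softplus keeps positivity so that $\hat\gamma_i\in\Gamma$. Finally, the mixture \eqref{eq:instance_warping} with weights $p_{ij}$ is a convex combination. I would either set every $\hat\gamma_{ij}$ approximately equal to $\gamma_i^*$ by letting the field ignore the cluster index beyond the code, or note that convex combinations preserve $C^1$ error bounds.

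I expect the main obstacle to be the modelling gap between (A3) and the actual architecture. The ODE state is one-dimensional and its initial value is the code $z_{ij}$, and $\gamma^*$ must come out of the normalization, so I have to build an explicit $f^*$ with $\hat\gamma_{ij}=\gamma_i^*$ exactly. My first choice is
\[
f^*(\tau,t,z)=\kappa\,\dot\gamma^*_{i(z)}(t),
\]
interpolated continuously and Lipschitz in $z$ between the finitely many codes. With this choice $\tau(t)=z+\kappa\gamma_i^*(t)$, so normalization returns $\gamma_i^*$ exactly. The remaining care goes into the post-softplus representation, which needs $\dot\gamma^*>0$, into the continuity of $\dot\gamma_i^*$ in $t$ that (A3) supplies, and into stability of the $p_{ij}$-dependence of $R$.
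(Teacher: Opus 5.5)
Your chain of approximations has the same skeleton as the paper's proof: (A1) for the codes, (A2) for the field, Grönwall for the trajectories, Lipschitz dependence of the boundary normalization, then SRVF continuity and summation over curves. The difference is at the decisive step, and your version is the correct one.

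The paper obtains only $\|\gamma_i^*-\hat\gamma_i\|_\infty\le C'(\delta_1+\delta_2)$. It then asserts that $\gamma\mapsto q(x\circ\gamma)$ is continuous from $(\Gamma,\|\cdot\|_\infty)$ into $L^2$. You instead use $Q_i^\gamma=(q_i\circ\gamma_i)\sqrt{\dot\gamma_i}$, ask for $C^1$ closeness, and extract it from $\dot\tau_i=f(\tau_i,t,z_i)$ together with uniform closeness of $f$ to $f^*$. The paper's sup-norm claim is false. For $x(t)=t$ and $\gamma_n(t)=t+\frac{c}{2\pi n}\sin(2\pi n t)$ with $0<c<1$, one has $\gamma_n\to\mathrm{id}$ uniformly, yet $\|\sqrt{\dot\gamma_n}-1\|_2$ is a fixed positive constant independent of $n$. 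So it is precisely the ODE parametrization's control of $\dot\tau$ that makes the theorem hold.

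You also close two gaps the paper leaves open.
\begin{itemize}
\item The paper sums per-curve bounds $\|q_i(\gamma_i^*)-q_i(\hat\gamma_i)\|_2^2<\varepsilon/N$ as if $R$ were additive in them. Your local Lipschitz bound for $R$ in $(Q_1,\dots,Q_N)$ correctly handles the weighted Karcher means.
\item The paper simply posits $(z_i^*,f^*)$ generating $\gamma_i^*$. Your $f^*(\tau,t,z)=\kappa\,\dot\gamma^*_{i(z)}(t)$ shows such a field exists compatibly with the code-as-initial-condition, the normalization, and the mixture. It also uses (A3) only through continuity and positivity of $\dot\gamma_i^*$, so your argument is slightly more general.
\end{itemize}
The paper's route buys only brevity.

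Two small repairs on your side.
\begin{itemize}
\item In Grönwall, use the Lipschitz constant of $f^*$, not of the approximant. Split $f(\tau_i,t,\hat z_i)-f^*(\tau_i^*,t,z_i)$ through $f^*(\tau_i,t,\hat z_i)$. Sup-norm density in (A2) gives no uniform control of the approximants' Lipschitz constants, so your bound $(\delta'+L\delta')e^{L}$ need not vanish as $\delta'\to 0$. The same splitting also gives your $C^1$ bound.
\item If $p_{ij}$ is recomputed from the warped curves, note that it is a smooth function of $\mathbf{a}_i$. In turn $\mathbf{a}_i$ moves continuously with $\|x_i\circ\hat\gamma_i-x_i\circ\gamma_i^*\|_\infty$, so continuity of $R$ is preserved. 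Your choice of making all mixture components approximate $\gamma_i^*$ removes the circularity between the warps and $p_{ij}$.
\end{itemize}
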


\paragraph{Remark 1.}
Assumptions (A1)--(A2) are grounded in classical universal approximation theorems for convolutional networks and Neural ODEs~\citep{hornik1989multilayer,teshima2020universal} on compact domains. Assumptions (A3)--(A4) mirror standard premises in elastic functional data analysis, where smooth diffeomorphic warpings capture phase variability and observations exhibit piecewise smoothness~\citep{ramsay2005functional,srivastava2011registration}. These conditions are empirically verifiable through learned warping monotonicity and ODE stability checks. Under these mild assumptions, Theorem~\ref{thm:approx} provides a theoretical foundation: with adequate network capacity, the Neural ODE registration module can approximate any admissible warping arbitrarily well in SRVF metric. This establishes a principled deep-learning alternative to classical elastic registration, aligning with the empirical robustness reported in Section~\ref{sec:discussion}.

\begin{theorem}[Consistency of Joint Registration and Clustering]
\label{thm:ConsJRC}
Let $\{x_i\}_{i=1}^N$ be functional observations generated from a finite mixture of $C$ latent clusters.
Assume that:

\begin{enumerate}[label=(B\arabic*)]
    \item (\textbf{Cluster Separability})  
    After perfect registration, the population Fourier coefficient vectors
    $\mathbf{a}_i^* \in \mathbb{R}^K$ form $C$ well-separated clusters, i.e.,
    \[
    \min_{j \neq j'} \|\mu_j^* - \mu_{j'}^*\|_2 \ge \Delta > 0,
    \]
    where $\mu_j^*$ denotes the population centroid of cluster $j$.
    
    \item (\textbf{Noise Model})  
    The observational noise is i.i.d. Gaussian with finite second moments.
    
    \item (\textbf{Identifiability})  
    The joint population risk admits a unique minimizer (up to label permutation).
\end{enumerate}

Then, as $N \to \infty$, the estimated soft assignments $\hat p_{ij}$ converge in probability to the true assignments $p_{ij}^*$, and the empirical joint objective $\mathcal{L}_{\mathrm{total}}$ converges to its population minimum.
\end{theorem}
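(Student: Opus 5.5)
The plan is to cast the estimator as an M-estimator and apply the classical argmin-consistency argument (uniform law of large numbers plus a well-separated minimum). Fix a parameter vector $\vartheta=(\Theta_{\mathrm{enc}},\Theta_{\mathrm{ode}},\mathbf{C})$ ranging over a compact set $\mathcal{K}$. Here $\mathbf{C}$ is restricted to a bounded ball, and the network weights are norm-bounded. This is the same capacity regime in which Theorem~\ref{thm:approx} provides approximation.

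\emph{Risk normalisation.} Write the per-sample contribution to $\mathcal{L}_{\mathrm{total}}/N$ as a loss $\ell(x_i;\vartheta,\bar\mu,\bar g)$. The cross-sample quantities are the weighted means $\mu_j$ and the cluster masses $g_j$. After dividing by $N$, both are ratios of sample averages of bounded continuous functions of $(x_i,\vartheta)$. Define the population risk $L(\vartheta)$ by replacing each empirical average with its expectation under the mixture distribution.

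\emph{Step 1: regularity.} Show that $\vartheta\mapsto \ell$ is continuous and dominated by an integrable envelope. The ingredients are as follows. Softplus keeps $\dot\tau>0$, so the denominator $\tau(1)-\tau(0)$ is bounded below on $\mathcal{K}$ and the normalisation (Eq.~\ref{eq:warp_norm}) is Lipschitz in $\vartheta$. Lipschitz dependence of ODE solutions on parameters and initial data (Gr\"onwall) then gives continuity of $\hat\gamma_{ij}$. The Student-$t$ kernel in Eq.~\ref{eq:t_dist} is smooth. Under (B2), the SRVF terms $\|Q_i\|^2$ have finite expectation. In practice they are computed on the discretised, Fourier-projected curves, so Gaussian noise enters only through finite second moments, consistent with (A4).

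\emph{Step 2: uniform convergence.} Apply a Glivenko--Cantelli argument for parametric classes that are continuous in $\vartheta$ on compact $\mathcal{K}$ with an integrable envelope. This gives uniform convergence of each sample average in probability. The continuous mapping theorem then transfers this to the ratios, because $g_j/N$ is bounded below by the positivity of the $t$-kernel on bounded sets. Hence $\sup_{\vartheta\in\mathcal{K}}|\mathcal{L}_{\mathrm{total}}(\vartheta)/N - L(\vartheta)|\to 0$ in probability.

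\emph{Step 3: identification and argmin consistency.} By (B3), $L$ has a unique minimiser $\vartheta^*$ up to label permutation; quotient by permutations. Continuity of $L$ on compact $\mathcal{K}$ makes this minimiser well separated: for every $\delta>0$, $\inf_{d(\vartheta,\vartheta^*)\ge\delta}L>L(\vartheta^*)$. The standard argmin theorem (van der Vaart, Thm.~5.7) then yields $\hat\vartheta\to\vartheta^*$ in probability. Combined with Step~2, it also gives $\mathcal{L}_{\mathrm{total}}(\hat\vartheta)/N\to \min L$, which is the second claim of the theorem.

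\emph{Step 4: assignments.} Since $p_{ij}$ is a continuous function of $(x_i,\vartheta)$, we get $\hat p_{ij}-p_{ij}(\vartheta^*)\to0$ in probability. It remains to identify $p_{ij}(\vartheta^*)$ with $p^*_{ij}$. Theorem~\ref{thm:approx} shows that the minimising warps achieve the ideal registration loss, so the aligned coefficients approach $\mathbf{a}_i^*$. The DEC sharpening in $\mathcal{L}_{\mathrm{clu}}$, together with the separation $\Delta$ in (B1), then forces the population-optimal centroids to sit at $\mu_j^*$. The soft assignments therefore coincide with the true ones; for hard labels, one uses the within-cluster spread being small relative to $\Delta$.

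I expect the main obstacle to be this last identification. A minimiser of the surrogate objective $L$ need not reproduce the true labels, because the $t$-kernel never yields exactly $0/1$ probabilities. The cleanest route is to define $p^*_{ij}:=p_{ij}(\vartheta^*)$, which (B3) effectively permits, and argue separately via (B1) that its argmax equals the true cluster. A secondary concern is that consistency over the full unbounded network class requires a sieve argument, with $\mathcal{K}_N$ growing with $N$. I would avoid this by fixing the architecture, as above.
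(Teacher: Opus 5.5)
\textbf{Verdict: Steps 1--3 are a different and sound route, but Step~4 has a genuine gap.}

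\textbf{Steps 1--3 versus the paper.} The paper argues sequentially. Its Step~1 invokes Theorem~\ref{thm:approx} as if it gave convergence in probability of the learned warps as $N\to\infty$. Its Step~2 applies a uniform LLN to $\mathcal{L}_{\mathrm{clu}}$ alone. Its Step~3 asserts that (B1) makes the true centroids the unique minimiser of the clustering risk. You instead treat all of $\vartheta$ as an M-estimator. You correctly reduce $\mu_j$ and $g_j/N$ to smooth functions of sample averages, and you use (B3) exactly where the argmin theorem needs it. You also rightly note that convergence of the minimal value needs only the uniform LLN.

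One technicality: $p_{ij}$ enters $\gamma_i$ through Eq.~\eqref{eq:instance_warping}. That warp produces the very $\mathbf{a}_i$ from which $p_{ij}$ is computed. So ``$p_{ij}$ is a continuous function of $(x_i,\vartheta)$'' requires a unique, continuously varying fixed point, and Brouwer does not supply one.

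Even granting this, Steps 1--3 only give $\hat p_{ij}-p_{ij}(\vartheta^*)\to 0$, where $\vartheta^*$ minimises the surrogate risk $L$ on $\mathcal{K}$. The gap is Step~4: identifying $p_{ij}(\vartheta^*)$ with the true $p^*_{ij}$, computed from $\mathbf{a}_i^*$ and $\mu_j^*$.

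\textbf{Why Step~4 fails.} Neither of your two justifications works.
\begin{itemize}
\item \emph{Theorem~\ref{thm:approx}.} It is an existence result whose parameters depend on $\varepsilon$, so inside a fixed $\mathcal{K}$ it holds only up to a fixed bias. In any case, closeness of $R$-values says nothing about where the minimiser of the \emph{joint} risk places the warps or the coefficients $\mathbf{a}_i$.
\item \emph{DEC sharpening plus (B1).} The claim that these pin the centroids at $\mu_j^*$ is false for this objective. We have $\mathcal{L}_{\mathrm{clu}}\ge 0$, with equality iff $p_{ij}/g_j$ is constant in $j$ for every $i$. That happens iff all rows of $P$ coincide, e.g.\ when $\mathbf{c}_1=\dots=\mathbf{c}_C$. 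So once $q$ is a function of the parameters, as in your framing and the paper's, this term rewards collapse, not sharpening.
\end{itemize}

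\textbf{A concrete counterexample.} Take two equal-size clusters of identical, unwarped, noiseless curves with Fourier coefficients $\mp m\mathbf{e}$, where $\|\mathbf{e}\|=1$ and $m\ge 2$. Let all component flows $\hat\gamma_{ij}$ be the identity, so the warps do not move with $p$. Take centroids $\mp c\,\mathbf{e}$.

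For a cluster-one curve, the odds are $r=p_{i2}/p_{i1}=\frac{1+(m-c)^2}{1+(m+c)^2}$. A direct computation gives $\mathcal{L}_{\mathrm{reg}}=\frac{Nr}{(1+r)^2}\|Q^{(1)}-Q^{(2)}\|_2^2$ and $\mathcal{L}_{\mathrm{clu}}=N\bigl[\log\frac{1+r}{1+r^2}+\frac{r^2}{1+r^2}\log r\bigr]$. Both are increasing in $r$ for $r\le 1/17$. At $c=m$ we have $r=1/(1+4m^2)\le 1/17$, and $r$ decreases as $c$ moves from $m$ toward $\sqrt{1+m^2}$. Hence pushing the centroids outward from $\mu_j^*$ strictly lowers $L$. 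By continuity, small Gaussian noise does not change this.

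So the true configuration need not minimise $L$. The identification therefore has to be imposed as an extra hypothesis, namely that the unique minimiser in (B3) induces $\gamma^*$ and $\mu^*$, and this example shows even that hypothesis can fail for this objective. Hard-label recovery is also unavailable under (B2), since Gaussian noise makes the clusters overlap; your ``spread small relative to $\Delta$'' condition is an additional margin assumption.

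\textbf{Consequence.} Your fallback $p^*_{ij}:=p_{ij}(\vartheta^*)$ proves consistency toward the surrogate minimiser, which is weaker than the stated theorem. The paper does not close this gap either. Its Step~3 simply asserts the identification, and the collapse observation above shows its claim that $\mathcal{L}_{\mathrm{clu}}$ is uniquely minimised at the true centroids is false.
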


\paragraph{Remark 2.}
Assumption (B1) posits cluster separability after phase removal—a standard condition in clustering theory that holds when phase variability dominates shape differences, as in biomedical signals and motion trajectories. Assumptions (B2)--(B3) specify a conventional noise model and joint objective identifiability, both assessable through residual diagnostics and partition stability across initializations. Theorem~\ref{thm:ConsJRC} shows that under these conditions, joint registration and clustering is statistically well-posed: increasing sample size causes alignment and clustering to mutually reinforce, yielding stable partitions under noise and severe phase variability. This explains the method's robustness to data imperfections and validates its use in large-scale unsupervised functional data analysis where both tasks are essential (Section~\ref{sec:discussion}).

\section{Experiments}
\subsection{Experimental Settings}

\paragraph{Datasets.}
We evaluate our method on diverse datasets from the UCR Time Series Classification Archive \citep{dau2019ucr}, selected for their phase variation and use in functional data benchmarks \citep{jiang2025deepfrc}. These datasets span multiple domains and include both multi-class and multivariate configurations:
\begin{itemize}[noitemsep,topsep=2pt]
    \item \textbf{Shapes} (image-derived): 1D sequences (1095 samples, 1024 points) converted from 2D shape boundaries, evaluated on a multi-class problem focusing on classes 4--5.
    \item \textbf{Wave} (sensor/motion): Accelerometer gesture recordings analyzed in both univariate ($d=1$, X-axis) and multivariate ($d=3$, X-, Y-, and Z-axis) configurations (1120 samples, 315 points), using a binary classification setup (classes 2 and 8).
    \item \textbf{Symbols} (handwriting): Pen trajectory data evaluated on both binary (2-class) and multi-class (3-class) subsets with 398 time points, containing 343 and 510 samples respectively.
\end{itemize}

\paragraph{Evaluation Metrics.}
We assess registration quality using the Adjusted Total Variance (ATV) \citep{jiang2025deepfrc}, which quantifies compactness within aligned classes relative to their separation—lower ATV indicates better reduction of phase variability. For clustering, we report Accuracy (ACC) and Normalized Mutual Information (NMI) \citep{strehl2002cluster}, where higher values reflect better alignment between predicted clusters and true labels. Formal definitions are provided in Appendix~\ref{app:metrics}.

\paragraph{Baselines.} We compare our proposed framework against traditional FDA techniques and recent deep learning models: 
\begin{itemize}[noitemsep,topsep=2pt]
\item \textbf{Traditional FDA}: Functional principal component analysis (FPCA) and B-spline expansion \citep{ramsay2005functional}, both followed by $k$-means. 
\item \textbf{Unsupervised Registration}: SrvfRegNet \citep{Chen2021}, a CNN-based model that aligns raw functional curves. 
\item \textbf{Deep Clustering}: Functional autoencoder (FAE) \citep{wu2024functional} and RandomNet \citep{li2024randomnet}, a state-of-the-art deep clustering model for functional data. 
\item \textbf{Graph-based}: K-graph \citep{boniol2025k}, which performs clustering via graph-based manifold learning. 
\item \textbf{Discrete Baseline}: We replace our Neural ODE warping module with SrvfRegNet to compare continuous-flow warping against a discrete registration network.
\item \textbf{Joint Methods}: JPCCA\citep{gaffney2004joint}, BSRC\citep{wu2016bayesian} and SRC\citep{zeng2019simultaneous}.These are classical staistical methods that perform simultaneous functional registration and clustering.
\end{itemize}

For a fair comparison, all sequential registration-then-clustering pipelines are evaluated by combining SrvfRegNet with each downstream clustering method (FPCA, B-spline, FAE, RandomNet, K-graph). The variant of our method that uses SrvfRegNet instead of the Neural ODE warper is denoted as \textit{Ours (N-ODE $\rightarrow$ SrvfRegNet)}, and the variant that uses short-time Fourier transform (STFT) features instead of the Fourier basis is denoted as \textit{Ours (Fourier $\rightarrow$ STFT)}. 

\paragraph{Implementation Details.}
Our feature encoder uses three 1D‑CNN blocks (channels: $16 \to 32 \to 64$, kernel size 3). The Neural ODE velocity field $f$ is an MLP with architecture $\left(C+1\right) \to 64 \to 64 \to C$, where $C$ equals the number of clusters (set to the ground‑truth value). We employ ELU activations \citep{clevert2020fast} to maintain smooth gradients. For spectral clustering we fix $K=10$ Fourier basis functions and use a constant loss weight $\alpha=0.01$.

Our model is optimized with Adam \citep{kingma2014adam} (initial learning rate $10^{-3}$), decayed by a factor of 10 every 100 epochs for a total of 300 epochs. Each experiment is repeated 10 times; results report mean. Hyperparameters of baseline methods follow their original papers or default settings. The number of clusters $C$ is set to the true number of classes for all methods, ensuring a fair comparison.

\subsection{Experimental Results}
\paragraph{Performance Visualization.}
As shown in Figure~\ref{fig:intro} (using the \textbf{Shapes} dataset), our method successfully disentangles phase and amplitude variation—particularly for curves in the blue cluster. In the raw data (Figure~\ref{fig:intro}(a)), nonlinear temporal shifts obscure ground-truth class boundaries. The warping functions learned by our method (Figure~\ref{fig:intro}(b)) satisfy the required monotonicity and diffeomorphism constraints, demonstrating the flexibility of the ODE‑based velocity field. After joint alignment and spectral clustering (Figure~\ref{fig:intro}(c)), curves within each cluster are synchronized to a common template, substantially reducing intra‑cluster total variation while increasing inter‑cluster separation, which directly improves clustering performance. Performance visualization for the remaining datasets are provided in Figures \ref{fig:res_wave} - \ref{fig:res_symbols3} in Appendix \ref{app:experiment}.

\begin{figure*}[t!]
      \centering

      \begin{subfigure}[b]{0.22\textwidth}
          \includegraphics[width=\linewidth]{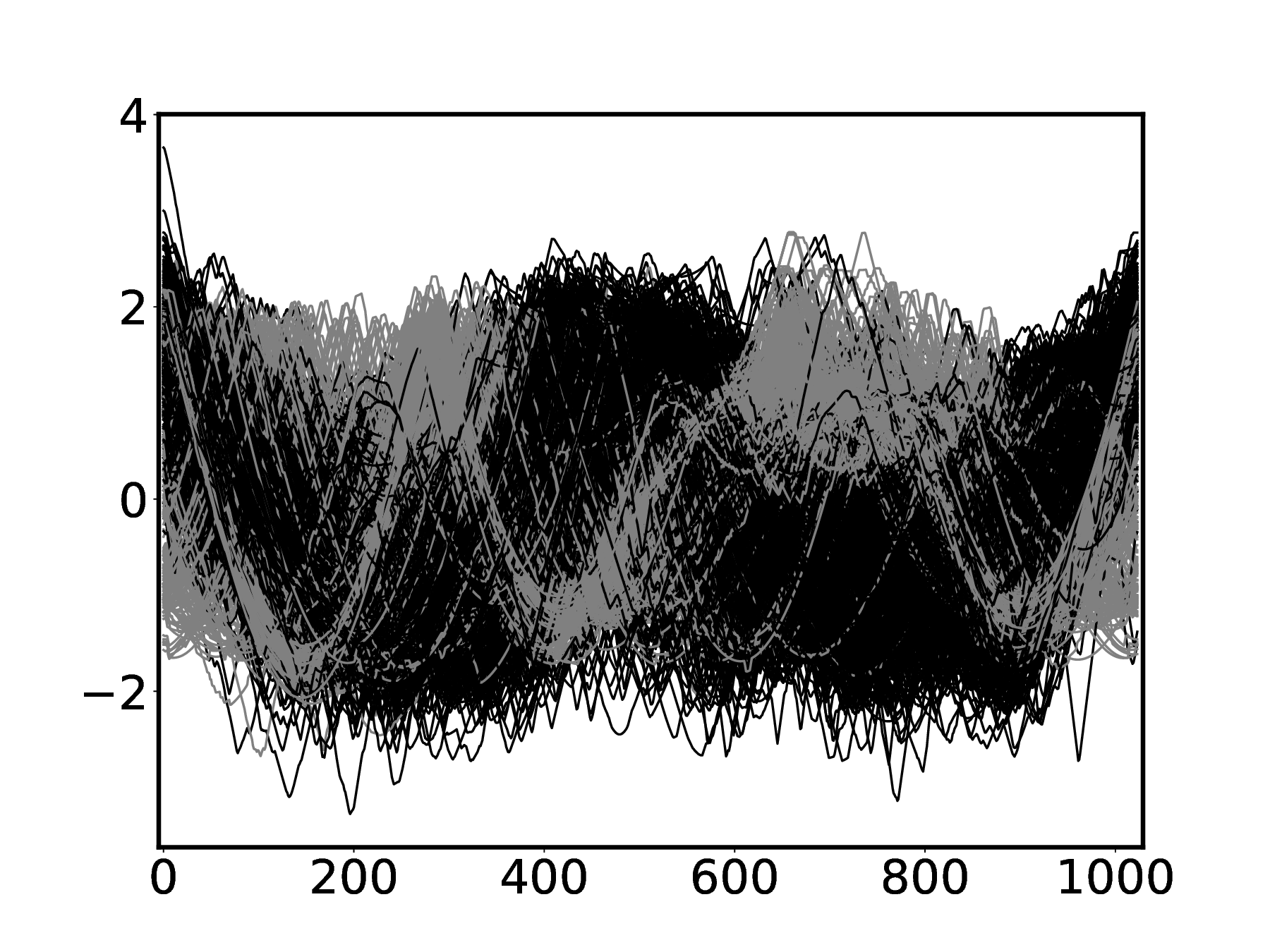}
          \caption{}
      \end{subfigure}
       \begin{subfigure}[b]{0.22\textwidth}
           \includegraphics[width=\linewidth]{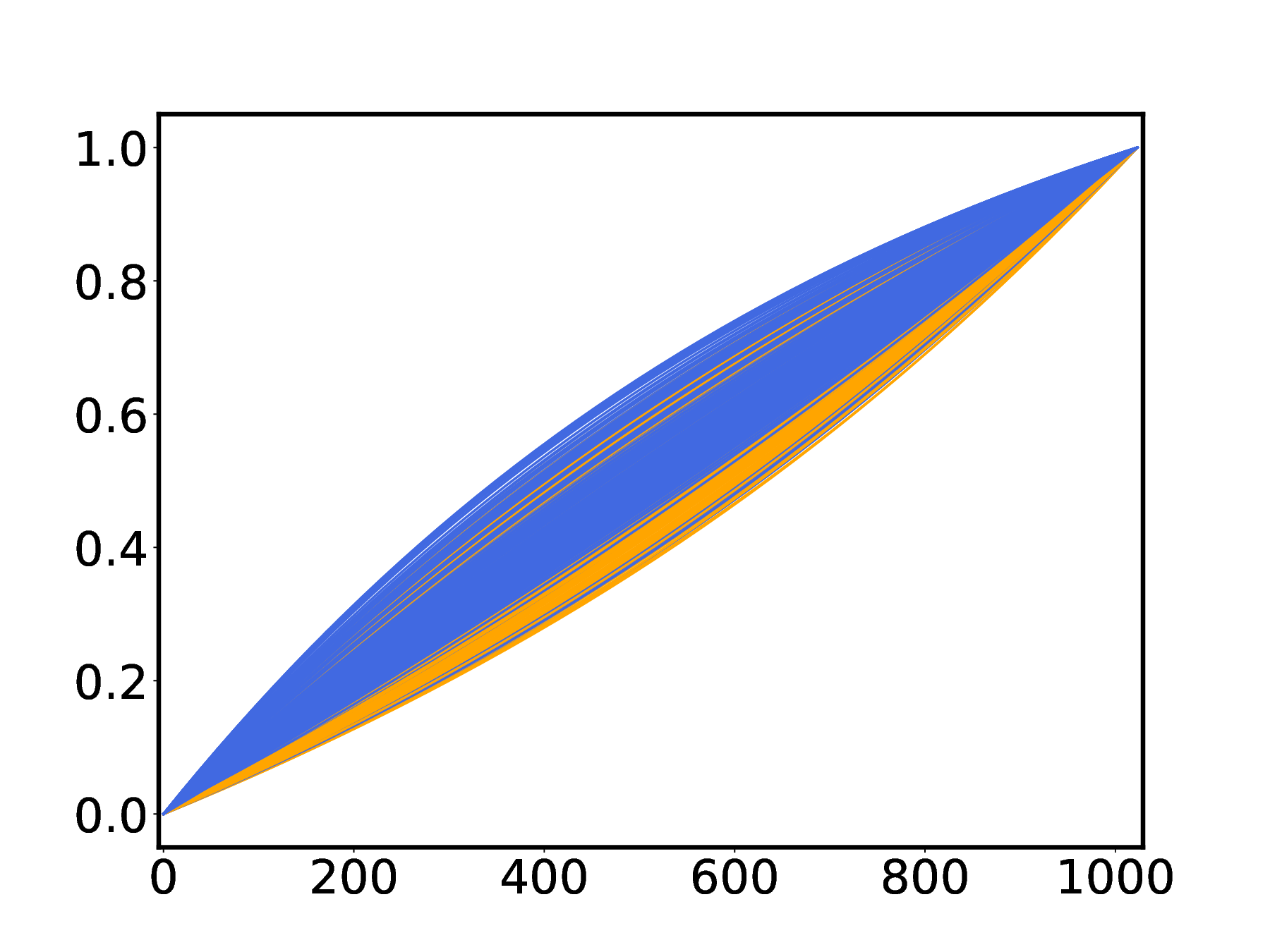}
          \caption{}
      \end{subfigure}
       \begin{subfigure}[b]{0.22\textwidth}
          \includegraphics[width=\linewidth]{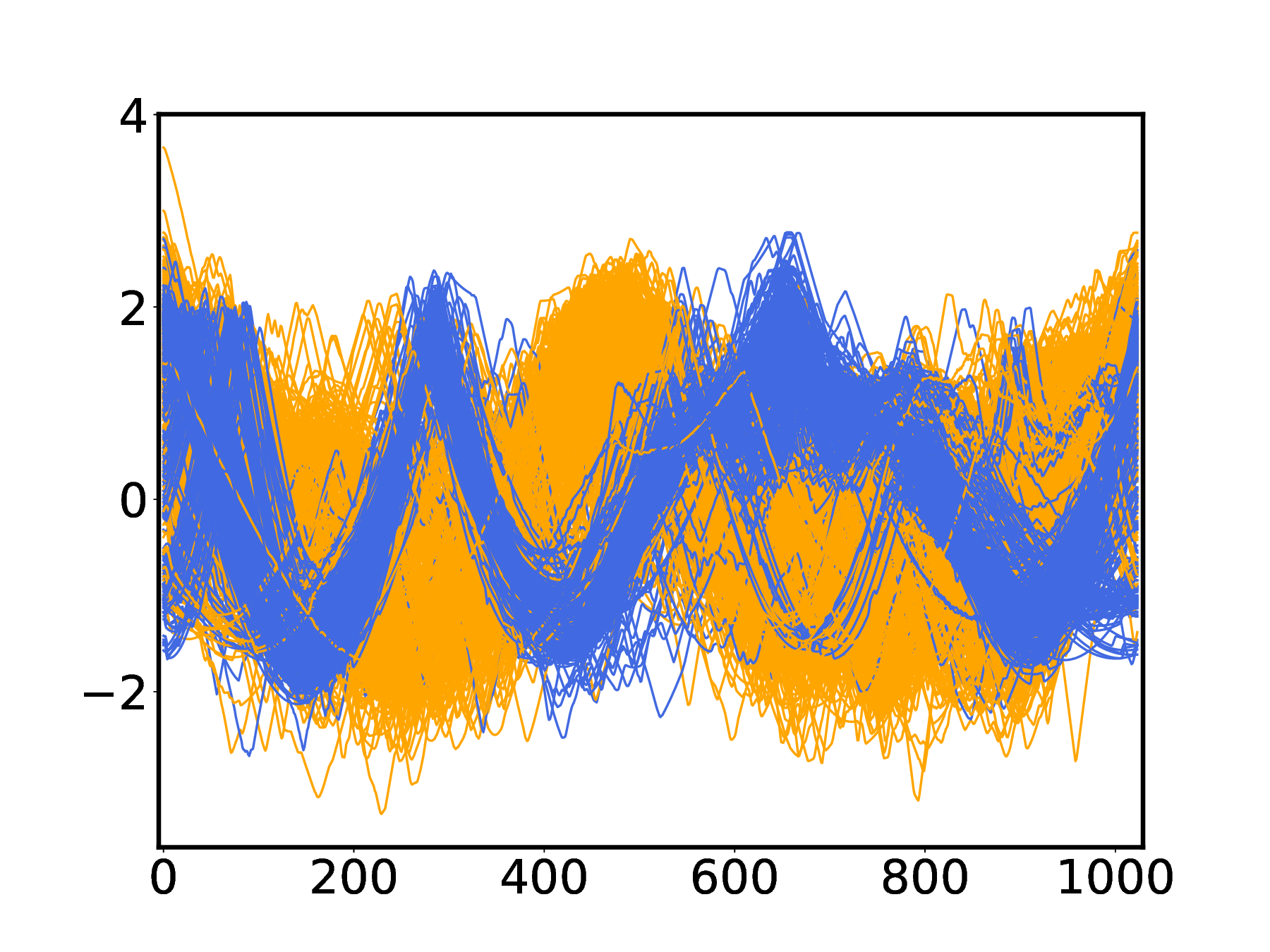}
          \caption{}
      \end{subfigure}
       \begin{subfigure}[b]{0.22\textwidth}
          \includegraphics[width=\linewidth]{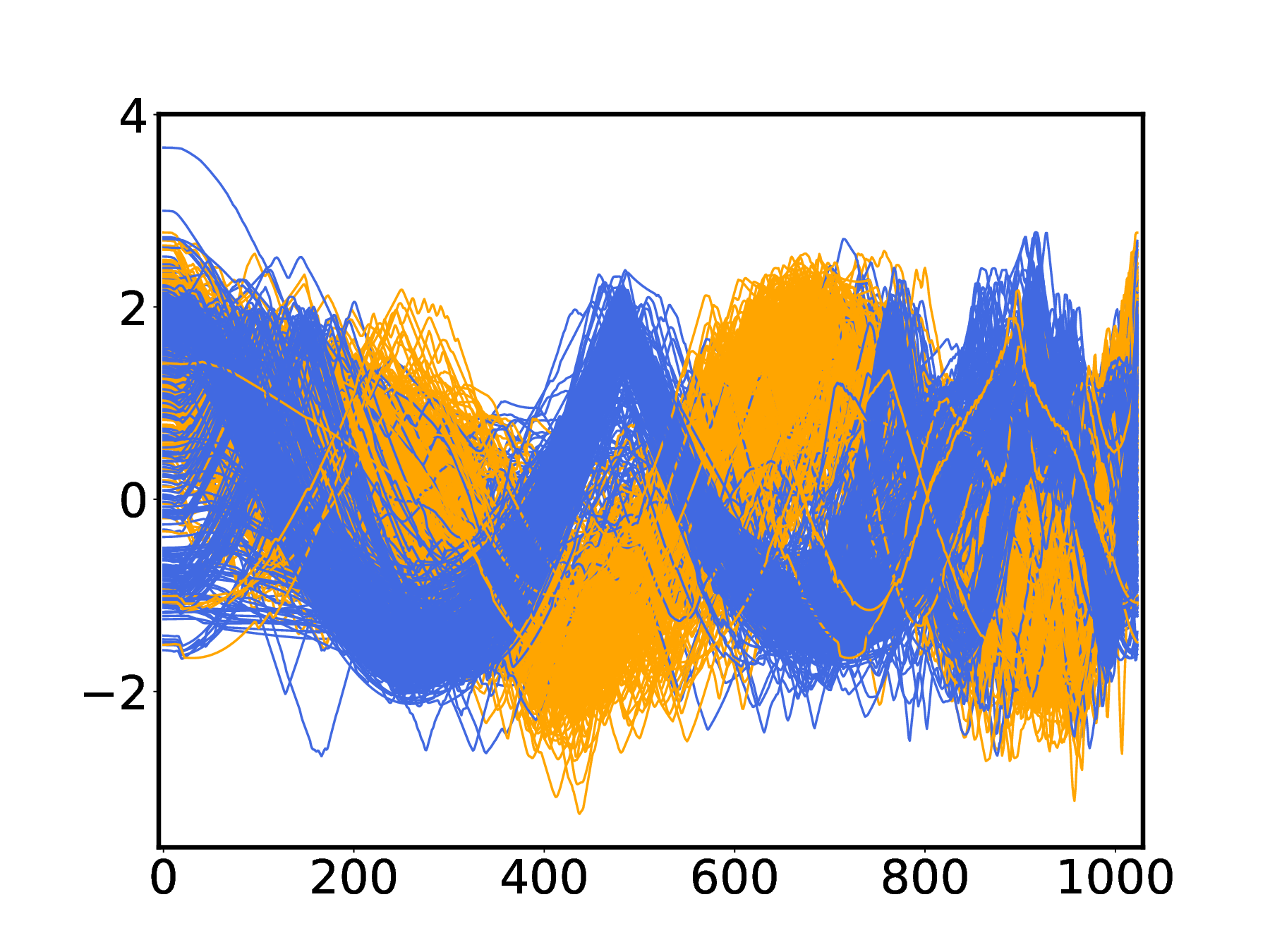}
          \caption{}
      \end{subfigure}

        \caption{(a) Raw data in \textbf{Shapes} dataset with two ground-truth groups indicated by black and grey. (b) Learned warping functions $\gamma(t)$'s by \textit{Ours}. (c) Alignment by \textit{Ours}. (d) Alignment by \textit{Ours (N-ODE $\rightarrow$ SrvfRegNet)}. Colored curves in (b)-(d) indicate the clustering results.}
        \label{fig:intro}
\end{figure*}

\paragraph{Performance Comparison.}
Table~\ref{tab:performance} summarizes the registration and clustering results across all datasets. In contrast to sequential pipelines (e.g., SrvfRegNet followed by clustering) that risk ``over-warping'' and do not use cluster structure to inform alignment, our joint framework (\textit{Ours}) yields superior registration and higher clustering accuracy. To verify whether these benefits originate from our continuous-flow formulation or merely from joint optimization, we compare our full Neural ODE model with three types of alternatives: a discrete-warping variant that substitutes the ODE with SrvfRegNet but retains joint training (\textit{Ours (N-ODE $\to$ SrvfRegNet)}); several representative joint registration–clustering methods (BSRC, JPCCA, SRC); and a feature-engineering variant that uses short-time Fourier transform features instead of the Fourier basis (\textit{Ours (Fourier $\to$ STFT)}). Quantitatively, the complete model outperforms the discrete variant on every dataset, with the most pronounced gap on Wave (d=1). On this dataset, which exhibits minimal phase variability, the discrete variant severely underperforms because jointly training SrvfRegNet without the ODE’s smoothness constraints causes optimization instability, weaker regularization that encourages over-warping even when warps are nearly trivial, and difficulty in adapting pretrained alignment alongside clustering. Our continuous Neural ODE warper avoids these issues entirely, achieving the highest accuracy, and this case therefore clearly highlights the superiority of continuous flows in maintaining stable alignment even when the true warping structure is simple. The classical joint baselines, despite also solving registration and clustering simultaneously, all yield markedly lower clustering accuracy. This demonstrates that the observed gains do not arise from the joint paradigm itself, but from our continuous diffeomorphic warping, mixture-of-flows formulation, and end-to-end optimization. Furthermore, replacing the Fourier basis with STFT features consistently degrades performance. After SRVF-based phase alignment, most residual amplitude variation resides in a low-frequency subspace; STFT overemphasizes high-frequency artifacts from imperfect warping, whereas the truncated Fourier basis provides a natural geometry-consistent denoising effect. Qualitatively, Figure~\ref{fig:intro}(c,d) on the \textbf{Shapes} dataset shows that our method produces smoother warpings and clearer cluster separation, visually justifying the improved metrics. These findings highlight the benefit of continuous-flow warping in the joint framework.

\begin{table*}[htbp]
    \centering
    \caption{Quantitative comparison of registration and clustering performance. \textbf{Bold} numbers indicate best results.}
    \label{tab:performance}
    \scalebox{0.78}{
    \setlength{\tabcolsep}{3pt}
    \begin{tabular}[b]{@{}*{16}{c}@{}}
        \toprule
        \multirow{2}{*}{Methods}& \multicolumn{3}{c}{\textbf{Shapes}} & \multicolumn{3}{c}{\textbf{Wave} ($d=1$)} & \multicolumn{3}{c}{\textbf{Wave} ($d=3$)} & \multicolumn{3}{c}{\textbf{Symbols} (2 classes)} & \multicolumn{3}{c}{\textbf{Symbols} (3 classes)}  \\
        \cmidrule(lr){2-4} \cmidrule(lr){5-7} \cmidrule(lr){8-10} \cmidrule(lr){11-13}  \cmidrule(lr){14-16}
         & ATV & ACC & NMI & ATV & ACC & NMI & ATV & ACC & NMI & ATV & ACC & NMI &ATV & ACC & NMI\\
        \midrule
        SrvfRegNet + FPCA & $23.2$ & $0.722$ &$ 0.180$ & $4.8$ & $0.990$ & $0.922$ & $13.5$  & $0.970$ & $0.825$ & $7.7$  & $0.760$ & $0.224$ & $2.4$ & $0.805$& $0.602$  \\
        SrvfRegNet + BSpline & $23.2$ & $0.697$ & $0.201$ & $4.8$  & $0.988$ & $0.909$ & $13.5$  & $0.744$  & $0.208$ & $7.7$  & $0.630$ & $0.049$ & $2.4$  &  $0.761$ & $0.551$ \\
        SrvfRegNet + FAE & $23.2$ & $0.521$ &  $0.000$ & $4.8$ & $0.511$ & $0.001$ & $13.5$  & $0.512$ & $0.000$ & $7.7$  & $0.521$ & $0.002$ & $2.4$ & $0.362$ & $0.003$  \\
        SrvfRegNet + RandomNet & $23.2$& $0.723 $ &$ 0.182$ & $4.8$ & $0.991$ & $0.921$ & $13.5$  & $0.971$ & $0.825$ & $7.7$  & $0.760$ & $0.226$ & $2.4$  & $0.805$& $0.602$  \\
        SrvfRegNet + K-Graph & $23.2$  & $0.918$ & $0.632$ & $4.8$ & $0.950$ & $0.740$ & $13.5$ & $0.947$ & $0.714$ & $7.7$ & $0.524$ & $0.002$ & $2.4$ & $0.656$ & $0.553$ \\
        SRC & 18.5 & 0.807 & 0.284 & 10.5 & 0.884 & 0.506 & 21.6 & 0.573 & 0.099 & 7.3 & 0.581 & 0.031 & 5.5 & 0.508 & 0.200\\
        BSRC & 18.5 & 0.755 & 0.279 & 6.1 & 0.852 & 0.443 & 28.5 & 0.742 & 0.244 & 19.8 & 0.647 & 0.064 & 5.7 & 0.831 & 0.594\\
        JPCCA & 12.5 & 0.622 & 0.000 & 18.1 & 0.503 & 0.005 & 46.7 & 0.531 & 0.053 & 11.4 & 0.892 & 0.518 & 10.3 & 0.355 & 0.000\\
        \textit{Ours (Fourier$\rightarrow$STFT)}  &15.1 & 0.822 & 0.398 & 5.4 & 0.988 & 0.910 &10.9& 0.965 & 0.783&5.8 & 0.886 & 0.520 & 2.6 & 0.849 & 0.671 \\
        \textit{Ours (N-ODE $\rightarrow$ SrvfRegNet)}& 8.6&$0.908$ & $0.599$& 7.6 & $0.552$ & $0.009$ &  14.0 & $0.796$ & $0.481$ &  $\mathbf{2.9}$  & $0.873$  & $0.495$ &  1.7  & $0.910$ & $0.779$  \\
        \textit{Ours}& $\mathbf{8.1}$&  $\mathbf{0.937}$ & $\mathbf{0.651}$  & $\mathbf{4.1}$ & $\mathbf{0.993}$ & $\mathbf{0.941}$ & $\mathbf{8.5}$ & $\mathbf{0.992}$ & $\mathbf{0.942}$ & $3.1$ & $\mathbf{0.930}$ & $\mathbf{0.639}$ & $\mathbf{1.6}$ & $\mathbf{0.951}$ & $\mathbf{0.835}$ \\
        \textit{Ours w/o Reg} & - & $0.884$ & $0.471$ & - & $0.948$ & $0.744$ & - & $0.954$ & $0.753$ & - &$0.787$ & $0.253$ & - & $0.849$ & $0.652$ \\
        \textit{Ours w/o Clu}& 24.1 & - & - & 5.8 & - & - & 14.0 & - & - & 7.0 & - & - & 2.9 & - & - \\
        \bottomrule
    \end{tabular}
    }
\end{table*}

\paragraph{Ablation Study.}
The bottom rows of Table~\ref{tab:performance} ablate our method's core components: Neural ODE-based warping with SRVF transformation, and Fourier spectral clustering with soft assignment. Replacing the ODE warper with discrete SrvfRegNet (\textit{Ours (N-ODE $\rightarrow$ SrvfRegNet)}) causes a consistent performance drop, confirming the advantage of continuous-flow registration. Removing registration entirely (\textit{Ours w/o Reg}) severely degrades clustering, while disabling clustering (\textit{Ours w/o Clu}) impairs alignment. Replacing the Fourier basis with short-time Fourier transform features(\textit{Ours (Fourier$\to$STFT)}) also leads to worse performance. As shown in Table~\ref{table:ablation} in Appendix \ref{app:experiment}, paired $t$-tests over 10 random seeds confirm that ablating any component yields a statistically significant decline ($p < 0.01$ for most comparisons). These results demonstrate that both registration and clustering modules are essential to our framework's performance.

\paragraph{Sensitivity Analysis.}
We analyze \textbf{NeuralFLoC}'s sensitivity to the loss weight $\alpha$ and the number of Fourier basis functions $K$ in Figure~\ref{fig:sensitivity}. Increasing $\alpha$ yields modest gains up to $\alpha \approx 0.010$, after which performance saturates (left panel). Larger $K$ improves results until $K \approx 10$; performance remains stable for $K \leq 14$ and degrades thereafter (right panel). All reported experiments use $\alpha = 0.010$ and $K = 10$.

\begin{figure}[htbp]
    \centering
    \begin{subfigure}[b]{0.46\textwidth}
        \centering
        \includegraphics[width=\textwidth]{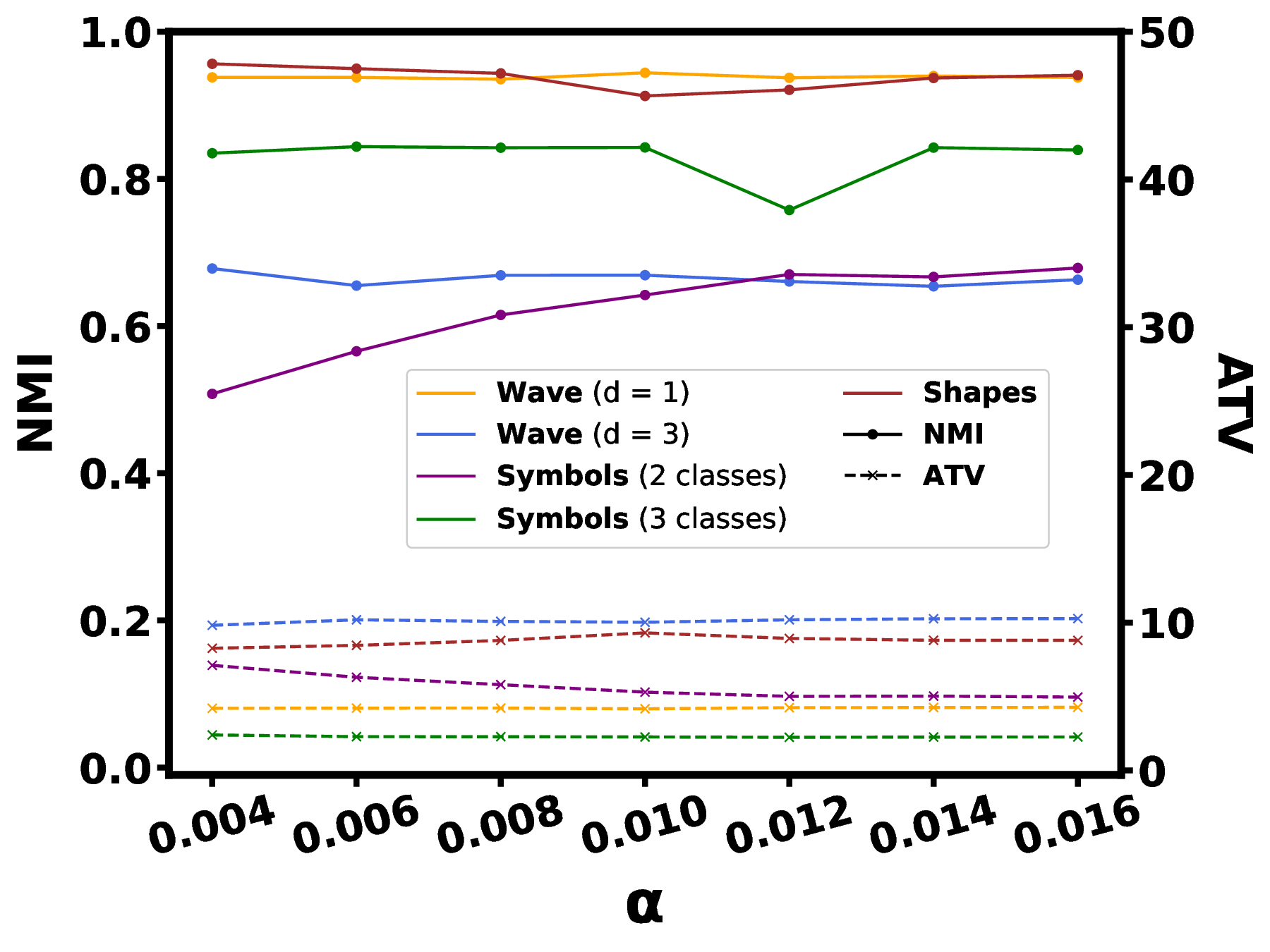}
    \end{subfigure}
    \begin{subfigure}[b]{0.46\textwidth}
        \centering
        \includegraphics[width=\textwidth]{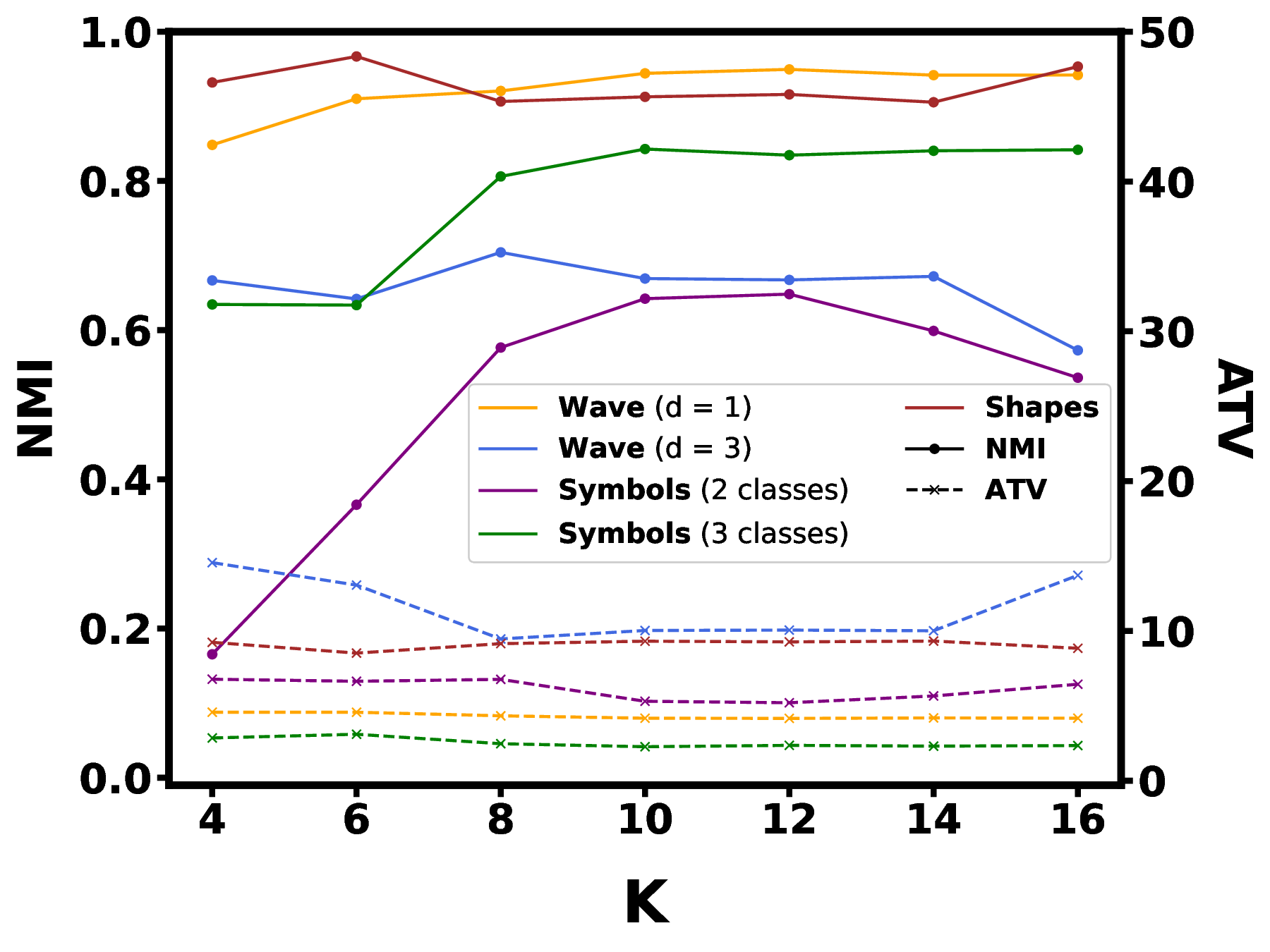}
    \end{subfigure}
    \caption{\textbf{Left}: Sensitivity to loss weight $\alpha$.  \textbf{Right}: Sensitivity to number of basis functions $K$}
    \label{fig:sensitivity}
\end{figure}

\section{Discussion}
\label{sec:discussion}
\paragraph{Handling Unknown Cluster Number in Practice.}
Although for fair comparison we used the ground-truth cluster number $C$ across all baselines, our method can naturally select $C$ in practice via an elbow analysis on the total variation (TV) metric from Fisher--Rao alignment \citep{Chen2021}. The rationale is that alignment quality, as measured by TV, is optimized when the cluster number is correctly chosen. Importantly, TV does not rely on ground-truth labels, making it a principled, method-specific criterion for cluster selection. Applying this elbow analysis to our five datasets yields cluster number estimates of $(3, 2, 2, 2, 3)$, closely matching the true cluster numbers $(2, 2, 2, 2, 3)$. This demonstrates that our framework offers a data-driven approach for choosing $C$ when it is unknown.
\paragraph{Robustness to Missing Data.}
We evaluate robustness to missing observations by randomly removing $5\%$, $10\%$, and $20\%$ of the points from half of each dataset, imputing the missing values with Fourier splines \citep{Wahb1990}. Using the same hyperparameter configuration, our model maintains registration and clustering performance comparable to that obtained on the complete data (left panel, Figure~\ref{fig:discussion}).

\begin{figure*}[htbp]
    \centering
    \begin{subfigure}[b]{0.31\textwidth}
        \centering
        \includegraphics[width=\textwidth]{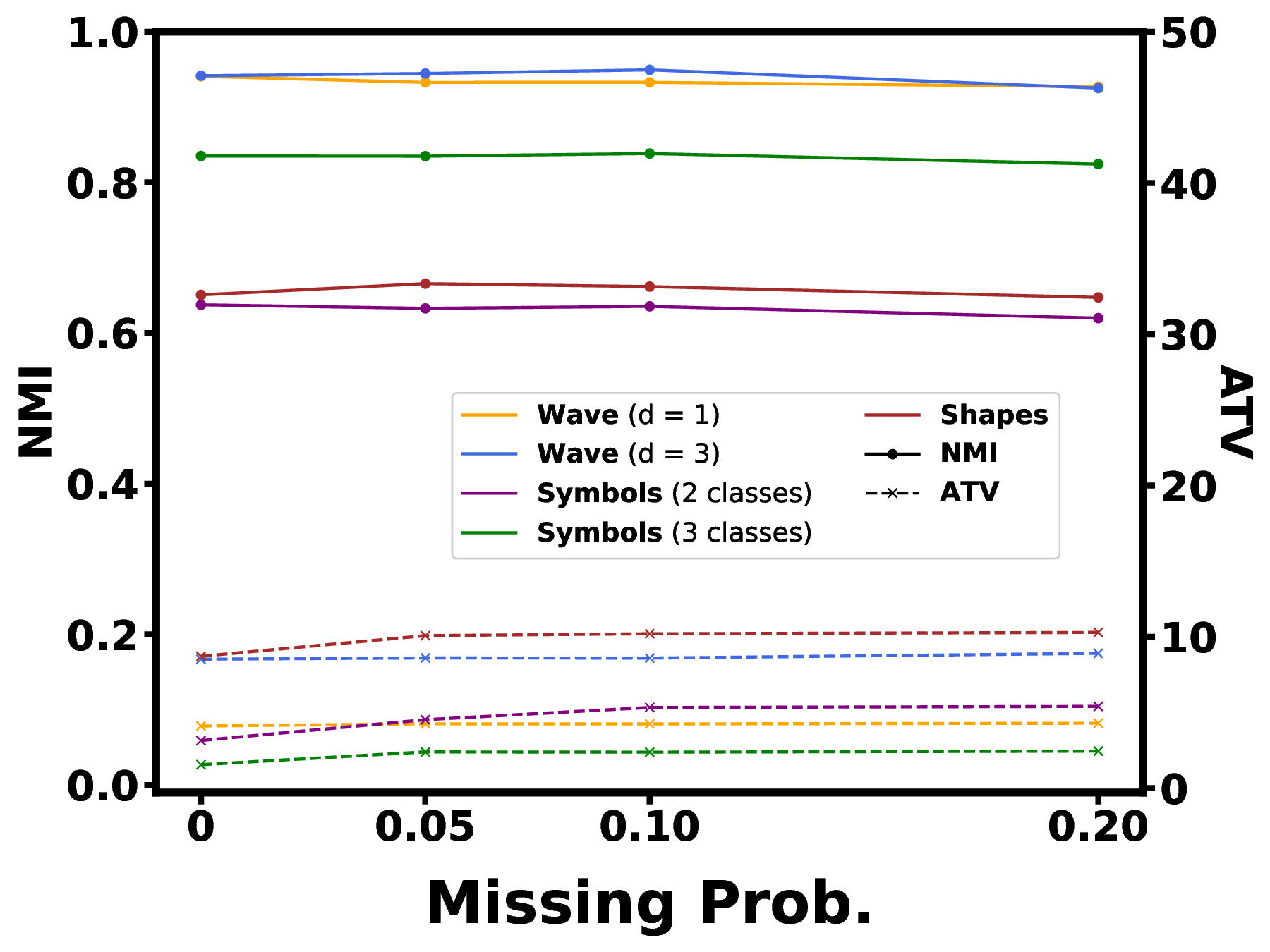}
    \end{subfigure}
    \begin{subfigure}[b]{0.31\textwidth}
        \centering
        \includegraphics[width=\textwidth]{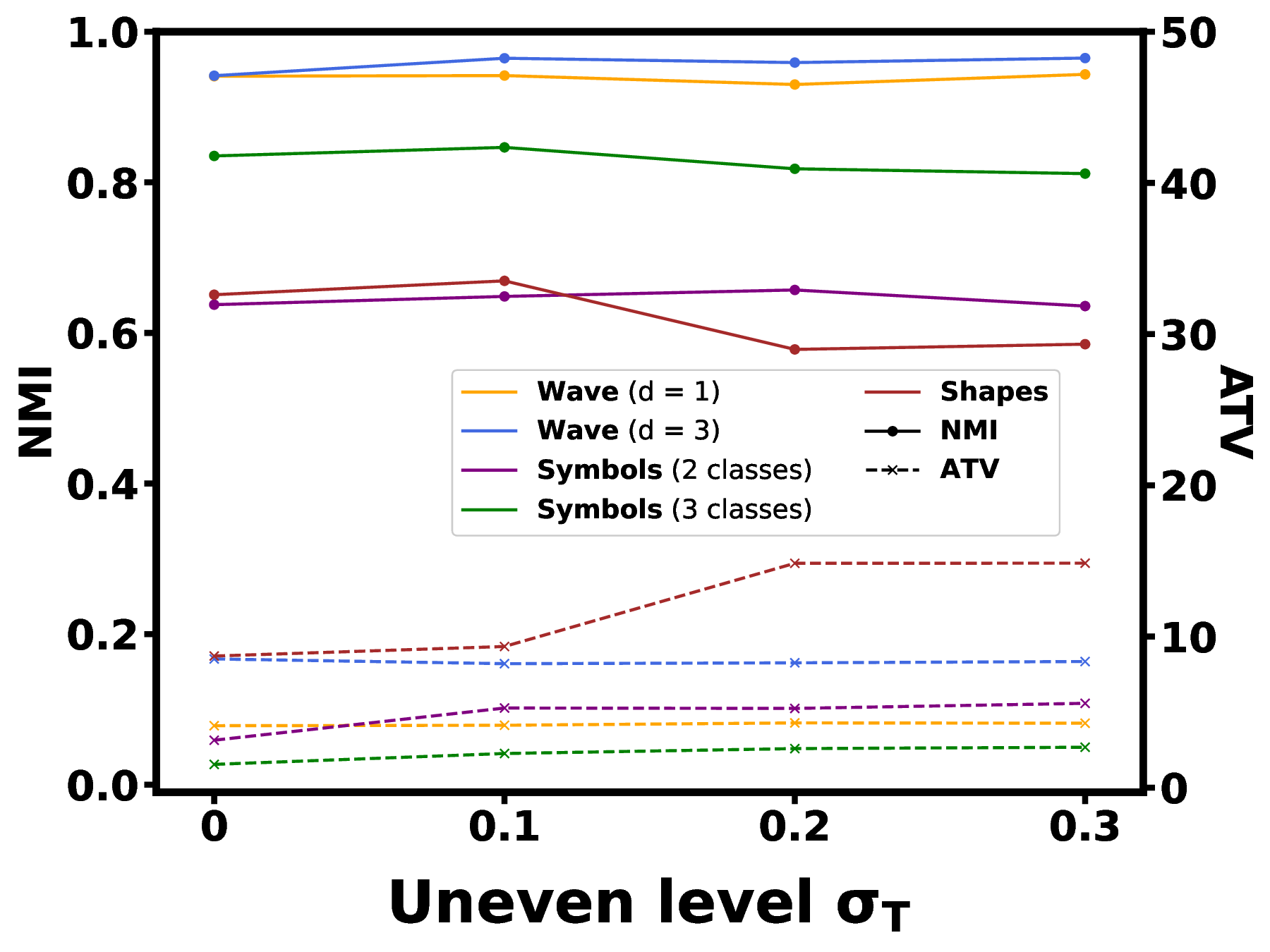}
    \end{subfigure}
     \centering
    \begin{subfigure}[b]{0.31\textwidth}
        \centering
        \includegraphics[width=\textwidth]{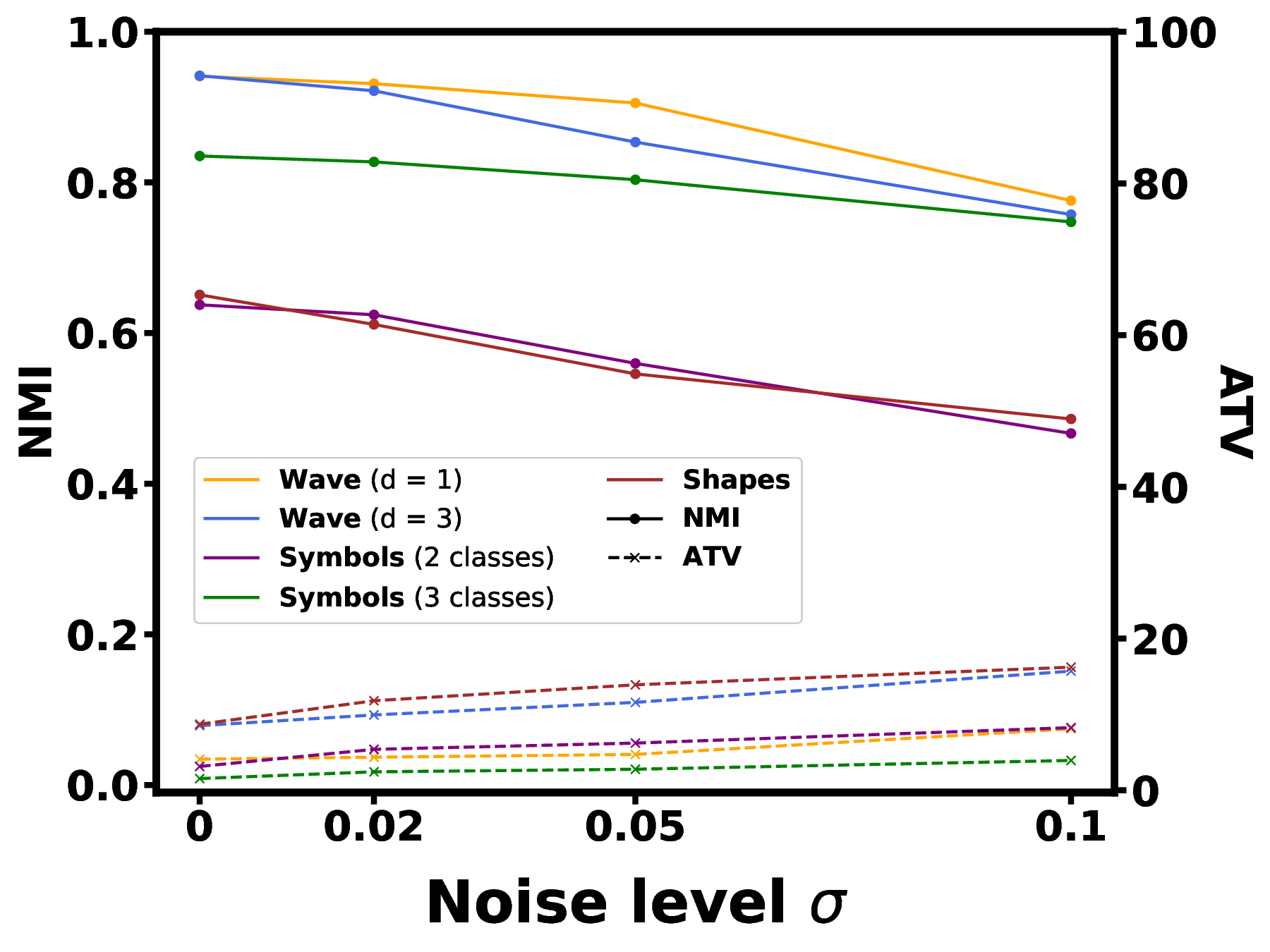}
    \end{subfigure}
    \caption{Robustness of \textbf{NeuralFLoC} to missing data, irregular sampling, and additive noise (left to right).}
    \label{fig:discussion}
\end{figure*}

\paragraph{Robustness to Irregular Sampling.}
We simulate irregular sampling by applying temporal perturbations to all datasets. First, each raw curve is fitted with a Fourier spline \citep{Wahb1990}; then, an uneven grid is generated using the same perturbation method as in the Irregular Sampling scenario of \citep{Zeng2025pslr}. The resulting evaluations on the perturbed grid are treated as irregularly sampled data. Three perturbation levels are tested ($\sigma_T \in \{0.1, 0.2, 0.3\}$).We apply a standard FDA preprocessing step: project each curve onto a truncated Fourier basis and reconstruct on a common grid. This preserves low-frequency structure—the most relevant component after SRVF alignment—while ensuring consistent input resolution. Without hyperparameter tuning, our method maintains stable registration and clustering performance across all datasets except \textbf{Shapes}, where performance degrades moderately as $\sigma_T$ increases (middle panel, Figure~\ref{fig:discussion}).

\paragraph{Robustness to Noise.}
We inject Gaussian noise with $\sigma \in \{0.02, 0.05, 0.10\}$ into Z‑score‑normalized real‑world datasets. As shown in the right panel of Figure~\ref{fig:discussion}, performance remains stable across all datasets at $\sigma=0.02$. For low‑to‑moderate noise ($\sigma \leq 0.05$), performance degrades gradually; even at $\sigma=0.10$, the decline is moderate on most datasets. Note that because the raw data already contain intrinsic noise, the resulting noisy observations exhibit considerably higher total variation than the injected noise level alone would suggest.

\paragraph{Scalability to Large Datasets.}
Our method scales efficiently with complexity $O\big(N(CK + CT + T)\big)$ (see Section \ref{subsec:complexity}). On the augmented 2-class \textbf{Symbols} dataset ($\approx 70$k samples, a $200\times$ scale-up; further scaling was limited by computational resources), the method maintains near-original performance (ATV = 3.2, ACC = 0.942, NMI = 0.683 vs. 3.1, 0.930, 0.639) and outperforms all baselines in both registration and clustering (Table~\ref{tab:scale}, Appendix~\ref{app:experiment}), confirming its advantages persist at scale.

\section{Conclusion}
We introduced \textbf{NeuralFLoC}, a fully unsupervised deep-learning framework that jointly addresses functional registration and clustering by integrating Neural ODEs with spectral clustering. By modeling warps as smooth diffeomorphic flows, \textbf{NeuralFLoC} aligns curves and learns discriminative templates without supervision, effectively separating phase and amplitude variation. Theoretically, we provide universal approximation guarantees and asymptotic consistency. Empirically, \textbf{NeuralFLoC} outperforms state-of-the-art methods in registration (ATV) and clustering (ACC, NMI), while demonstrating robustness to missing data, irregular sampling, noise, and scale.

\textbf{NeuralFLoC} has limitations: performance can degrade on trajectories with sharp discontinuities or high-frequency volatility, where smoothness assumptions break, and a joint framework offers little benefit when phase variability is negligible. Future work includes extending \textbf{NeuralFLoC} to such challenging trajectories and incorporating uncertainty quantification for the learned warps.

\newpage



\bibliographystyle{plainnat}
\bibliography{reference}

\newpage
\appendix
\onecolumn
\section{Proof of Theorems \ref{thm:approx} and \ref{thm:ConsJRC}}
\label{App:thm_proof}
\setcounter{equation}{0}
\renewcommand{\theequation}{A.\arabic{equation}}
\renewcommand{\thefigure}{A.\arabic{figure}}
\renewcommand{\thetable}{A.\arabic{table}}
\begin{proof}[Proof of Theorem Theorem \ref{thm:approx}]
We show that arbitrarily small approximation error in the encoder and vector field induces arbitrarily small error in the SRVF registration loss.

\paragraph{Step 1: Encoder Approximation.}
By Assumption (A1) and compactness of the input domain, for any $\delta_1>0$ there exists $\Theta_{\mathrm{enc}}$ such that
\[
\| \Phi(x_i;\Theta_{\mathrm{enc}}) - z_i^* \|_2 < \delta_1,
\]
where $z_i^*$ denotes the latent initial condition generating $\gamma_i^*$.

\paragraph{Step 2: Vector Field Approximation.}
By Assumption (A2), for any $\delta_2>0$ there exists $\Theta_{\mathrm{ode}}$ such that
\[
\sup_{(\tau,t,z)} 
\| f^*(\tau,t,z) - f(\tau,t,z;\Theta_{\mathrm{ode}}) \|
< \delta_2.
\]

\paragraph{Step 3: Stability of ODE Trajectories.}
Let $\tau_i^*(t)$ and $\tau_i(t)$ denote the ODE solutions corresponding to $(z_i^*,f^*)$ and $(\hat z_i,f)$ respectively.
By continuous dependence of ODE solutions on initial conditions and vector fields (via Grönwall’s inequality),
\[
\|\tau_i^* - \tau_i\|_\infty
\le C(\delta_1+\delta_2),
\]
for some constant $C>0$ depending only on the Lipschitz constant of $f^*$.

Since boundary normalization is a Lipschitz map on strictly monotone trajectories, the induced warping functions satisfy
\[
\|\gamma_i^* - \hat{\gamma}_i\|_\infty
\le C'(\delta_1+\delta_2).
\]

\paragraph{Step 4: Continuity of the SRVF Loss.}
By Assumption (A4), the SRVF mapping
\[
\gamma \mapsto q(x\circ\gamma)
\]
is continuous from $(\Gamma,\|\cdot\|_\infty)$ to $L^2([0,1])$ \citep{srivastava2011registration}.
Hence, there exists $\eta>0$ such that
\[
\|\gamma_i^* - \hat{\gamma}_i\|_\infty < \eta
\quad \Rightarrow \quad
\|q_i(\gamma_i^*) - q_i(\hat{\gamma}_i)\|_2^2 < \varepsilon/N.
\]

Summing over $i=1,\dots,N$ yields
\[
|R(\gamma^*) - R(\hat{\gamma})| < \varepsilon.
\]
\end{proof}

\begin{proof}[Proof of Theorem \ref{thm:ConsJRC}]
We establish consistency by analyzing the asymptotic behavior of the joint empirical objective.

\paragraph{Step 1: Consistency of Registered Functional Representations.}
By Theorem~\ref{thm:approx}, the learned diffeomorphic warpings converge in probability to the true warpings as $N \to \infty$. Consequently, the registered curves $\tilde x_i$ converge uniformly to their population-aligned counterparts.
Since the Fourier basis $\{\phi_k\}_{k=1}^K$ is orthonormal in $L^2([0,1])$, the corresponding Fourier coefficients
\[
\mathbf{a}_i = \big(\langle \tilde x_i, \phi_1\rangle, \ldots, \langle \tilde x_i, \phi_K\rangle \big)
\]
converge in probability to the true coefficients $\mathbf{a}_i^*$.

\paragraph{Step 2: Uniform Convergence of the Clustering Objective.}
Let $\mathcal{L}_{\mathrm{clu}}^{(N)}$ and $\mathcal{L}_{\mathrm{clu}}$ denote the empirical and population clustering objectives, respectively.
Under Assumption (B2), the Fourier coefficients have finite second moments, and the Student-$t$ kernel defining $p_{ij}$ is smooth and bounded.
Hence, by a uniform law of large numbers,
\[
\sup_{\{\mu_j\}} \big| \mathcal{L}_{\mathrm{clu}}^{(N)} - \mathcal{L}_{\mathrm{clu}} \big| \xrightarrow{p} 0 .
\]

\paragraph{Step 3: Identifiability and Assignment Consistency.}
By Assumption (B1), the population clustering risk $\mathcal{L}_{\mathrm{clu}}$ admits a unique minimizer corresponding to the true cluster centroids.
Standard results on mixture model estimation imply that the empirical minimizer converges in probability to the population minimizer, yielding
\[
\hat p_{ij} \xrightarrow{p} p_{ij}^* .
\]

\paragraph{Step 4: Joint Objective Convergence.}
The registration loss is continuous with respect to the warping functions and the cluster centroids (Theorem~\ref{thm:approx}).
Combining Steps 1–3, the empirical joint objective $\mathcal{L}_{\mathrm{total}}$ converges uniformly to its population counterpart.
By Assumption (B3), the population minimizer is unique (up to label permutation), hence
\[
\mathcal{L}_{\mathrm{total}}^{(N)} \xrightarrow{p} \mathcal{L}_{\mathrm{total}}^*,
\]
and the joint registration–clustering estimator is consistent.
\end{proof}

\section{Evaluation Metrics}
\label{app:metrics}
To evaluate registration quality, we utilize the \textbf{Adjusted Total Variance (ATV)}\citep{srivastava2016functional, jiang2025deepfrc}, which measures the compactness of aligned classes relative to their separation:
\begin{equation}
    \text{ATV} = \frac{1}{\binom{C}{2}}\sum_{1\le u < v \le C}\frac{\text{TV}_{uv}}{d(\text{mean}(\tilde{x}_u), \text{mean}(\tilde{x}_v))},
    \label{equation:adjusted_total_variance}
\end{equation}
where $\text{TV}_{uv}$ is the intra-class total variation. Lower ATV signifies that the registration successfully reduced phase variability within clusters while maintaining discriminative shape features.

lustering performance is quantified using \textbf{Accuracy (ACC)} and \textbf{Normalized Mutual Information (NMI)} \citep{strehl2002cluster}, defined in Eq. \ref{equation:cluster_accuracy} and \ref{equation:normalized_mutual_information}:
\begin{equation}
    \text{ACC} = \max_m \frac{\sum_{i=1}^n \mathbf{1}\{l_i = m(c_i)\}}{n},
    \label{equation:cluster_accuracy}
\end{equation}
where $l_i$ is the ground-truth label, $c_i$ is the predicted cluster, and $m$ iterates over all optimal permutations of labels.
\begin{equation}
    \text{NMI}(\Omega, \mathcal{C}) = \frac{I(\Omega; \mathcal{C})}{[H(\Omega) + H(\mathcal{C})]/2},
    \label{equation:normalized_mutual_information}
\end{equation}
where $I$ is mutual information and $H$ is entropy. Higher ACC and NMI indicate better cluster-class alignment.

\section{More Experimental Results}
\label{app:experiment}
\begin{figure*}[ht]
      \centering

     \begin{subfigure}[b]{0.22\textwidth}
          \includegraphics[width=\linewidth]{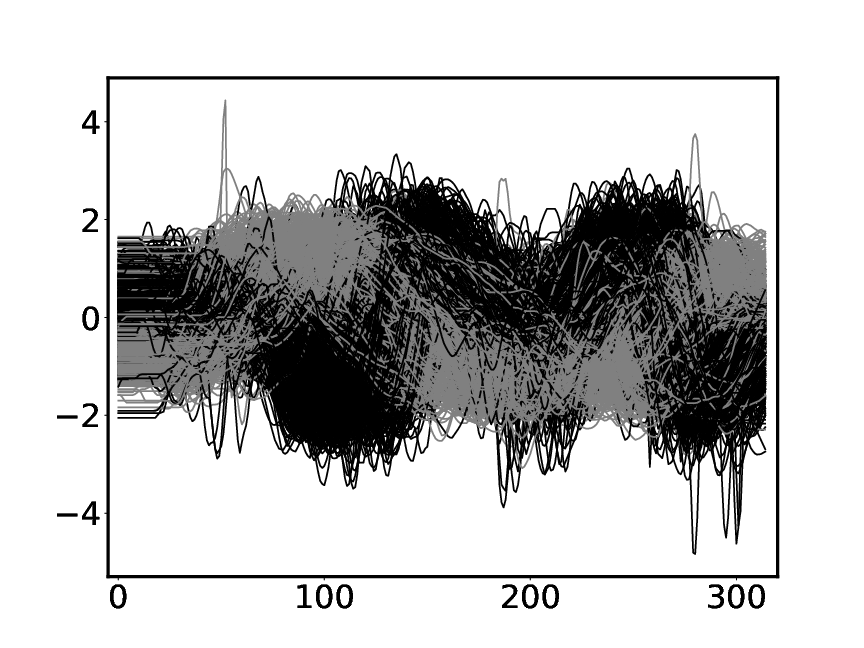}
          \caption{}
      \end{subfigure}
       \begin{subfigure}[b]{0.22\textwidth}
           \includegraphics[width=\linewidth]{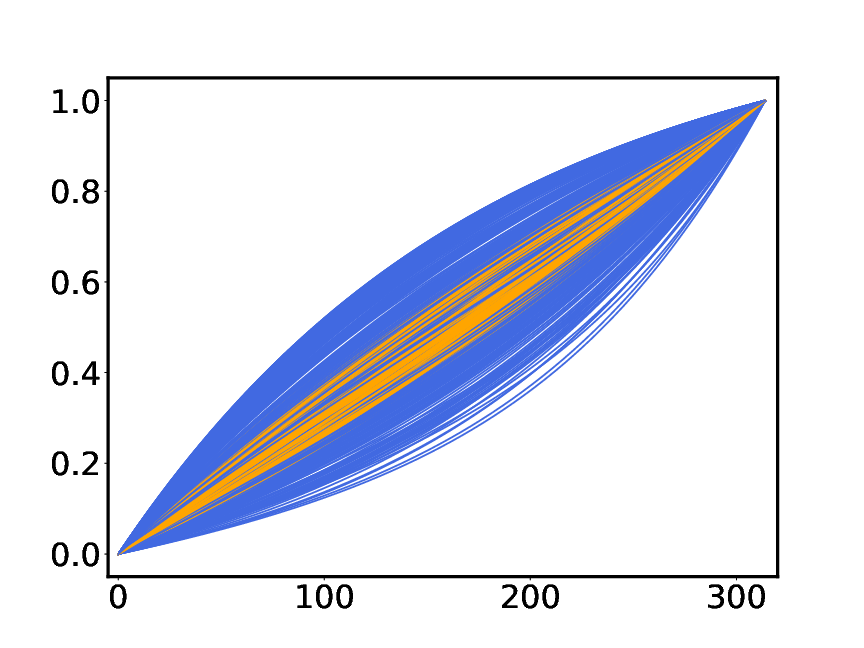}
           \caption{}
      \end{subfigure}
       \begin{subfigure}[b]{0.22\textwidth}
          \includegraphics[width=\linewidth]{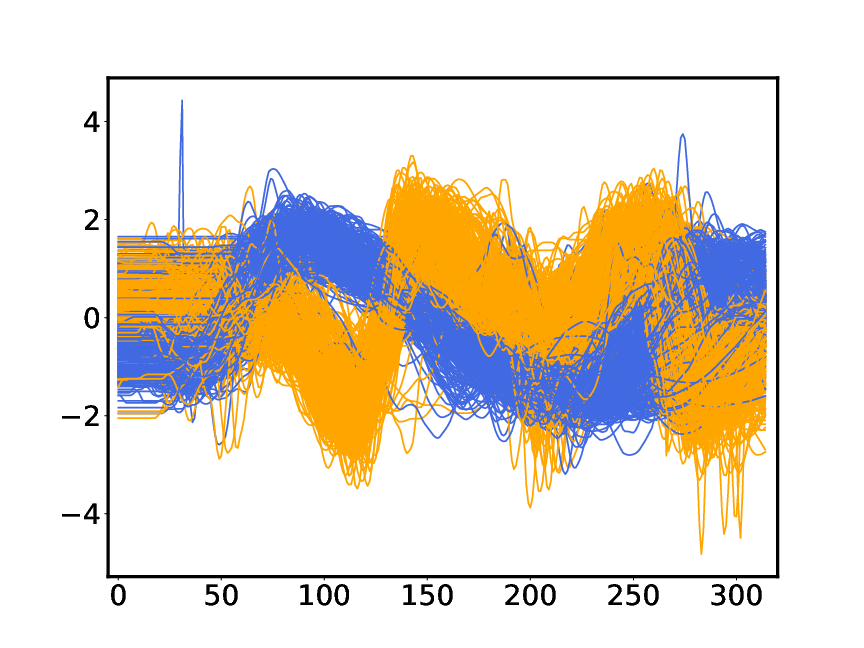}
          \caption{}
      \end{subfigure}
       \begin{subfigure}[b]{0.22\textwidth}
          \includegraphics[width=\linewidth]{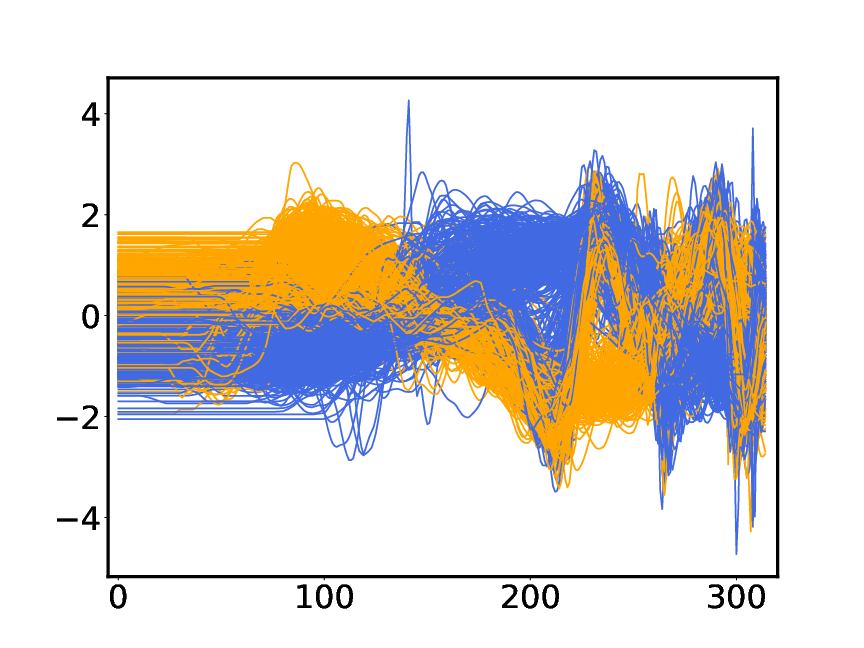}
          \caption{}
      \end{subfigure}

        \caption{(a) Raw data in \textbf{Wave} ($d=1$) dataset with two ground-truth groups indicated by black and grey. (b) Learned warping functions $\gamma(t)$'s by \textit{Ours}. (c) Alignment by \textit{Ours}. (d) Alignment by \textit{Ours (N-ODE $\rightarrow$ SrvfRegNet)}. Colored curves in (b)-(d) indicate the clustering results.}
        \label{fig:res_wave}
\end{figure*}

\begin{figure*}[ht]
      \centering

    \begin{subfigure}[b]{0.22\textwidth}
          \includegraphics[width=\linewidth]{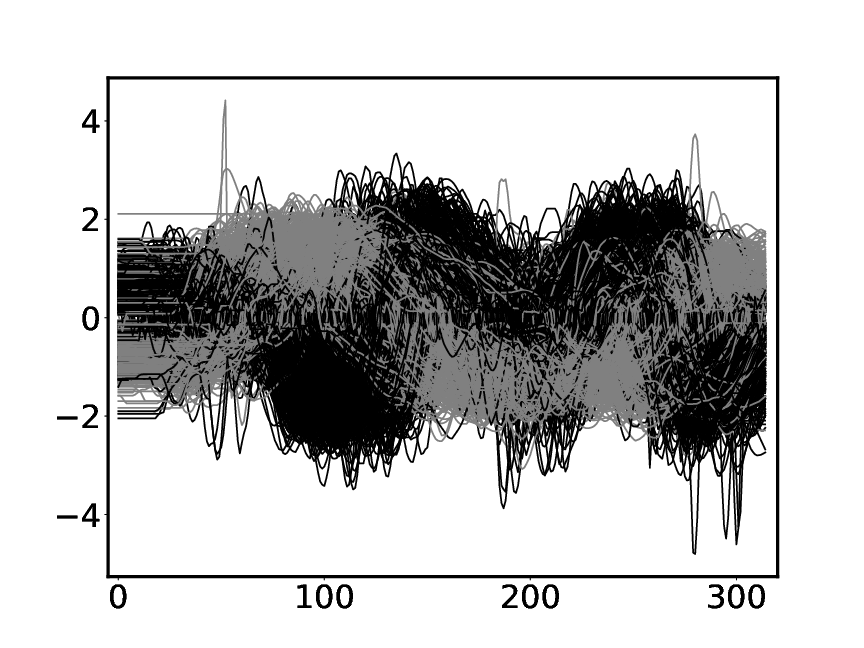}
      \end{subfigure}
       \begin{subfigure}[b]{0.22\textwidth}
           \includegraphics[width=\linewidth]{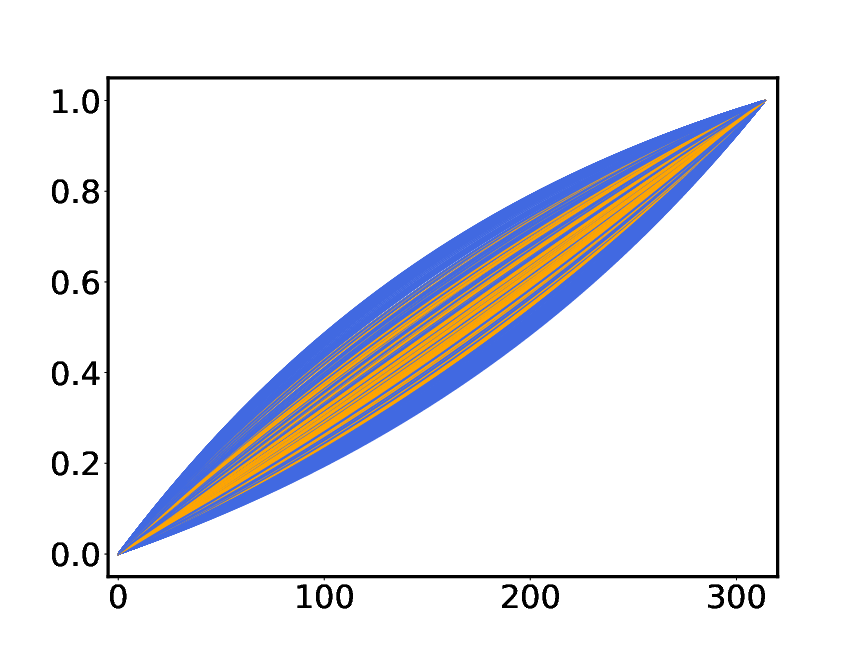}
      \end{subfigure}
       \begin{subfigure}[b]{0.22\textwidth}
          \includegraphics[width=\linewidth]{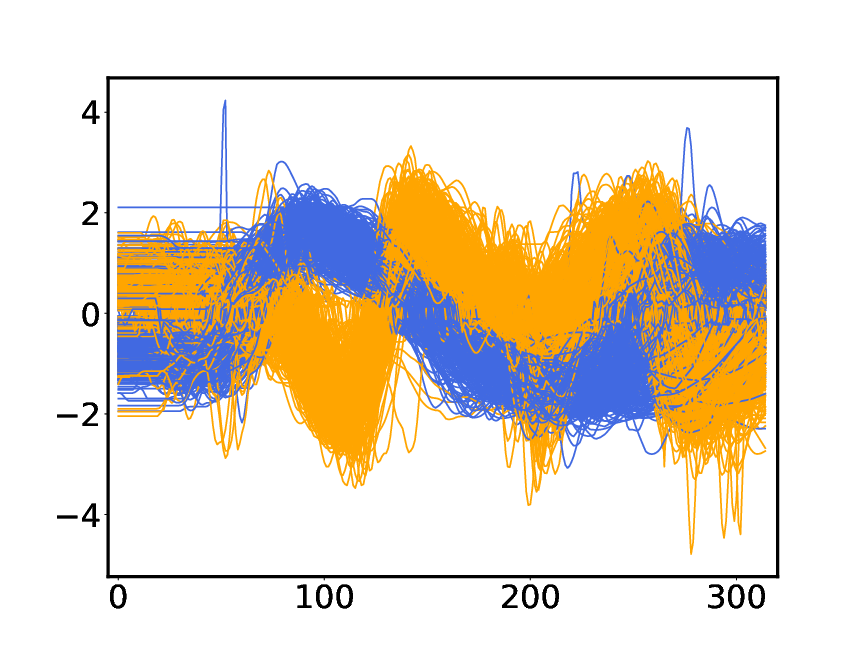}
      \end{subfigure}
       \begin{subfigure}[b]{0.22\textwidth}
          \includegraphics[width=\linewidth]{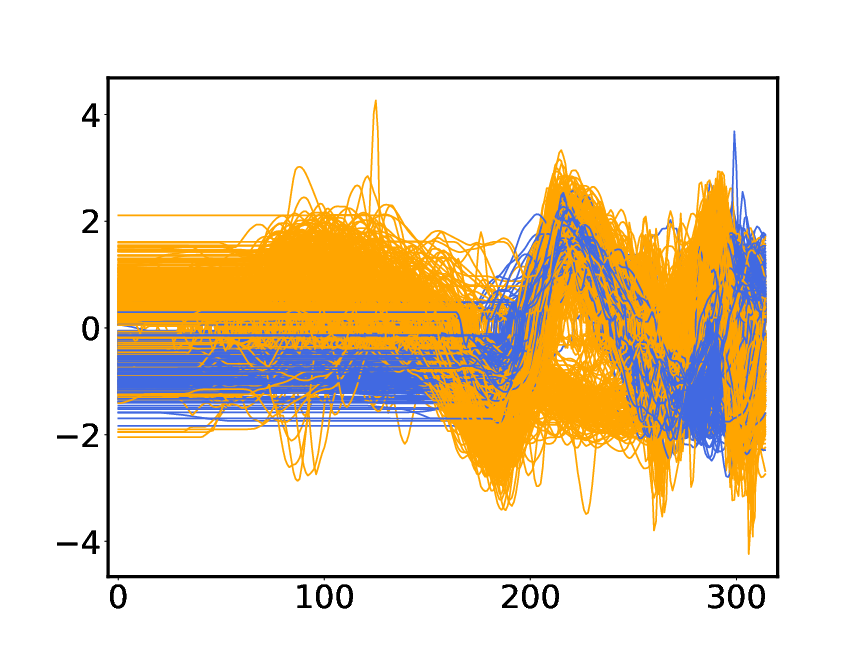}
      \end{subfigure}

        \begin{subfigure}[b]{0.22\textwidth}
          \includegraphics[width=\linewidth]{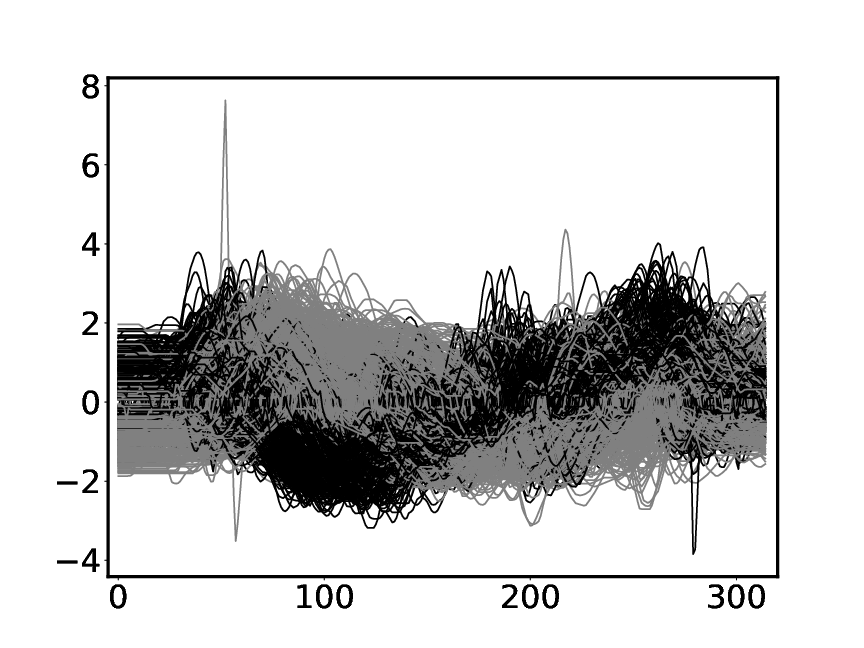}
      \end{subfigure}
       \begin{subfigure}[b]{0.22\textwidth}
           \includegraphics[width=\linewidth]{figs/gamma_functions_WaveAll.png}
      \end{subfigure}
       \begin{subfigure}[b]{0.22\textwidth}
          \includegraphics[width=\linewidth]{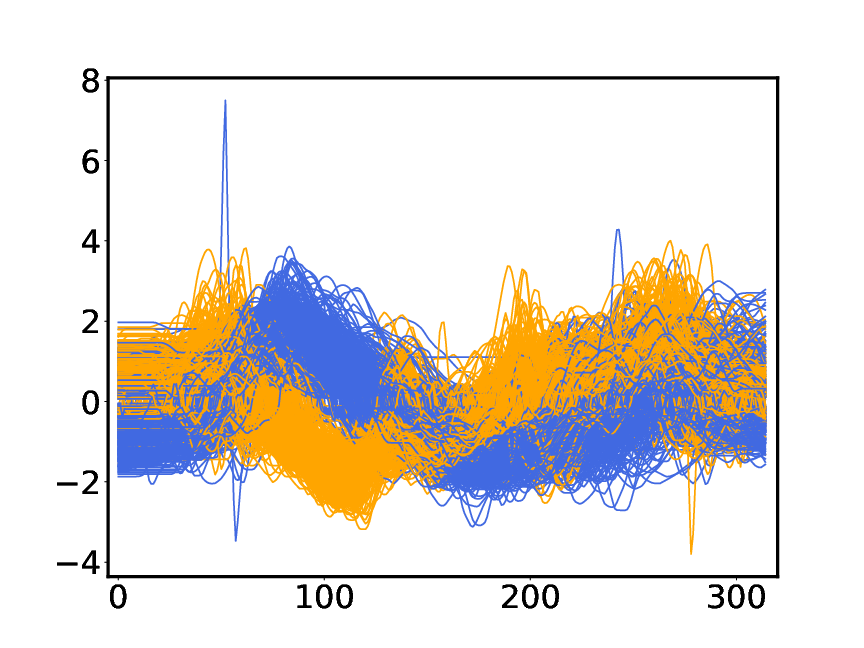}
      \end{subfigure}
       \begin{subfigure}[b]{0.22\textwidth}
          \includegraphics[width=\linewidth]{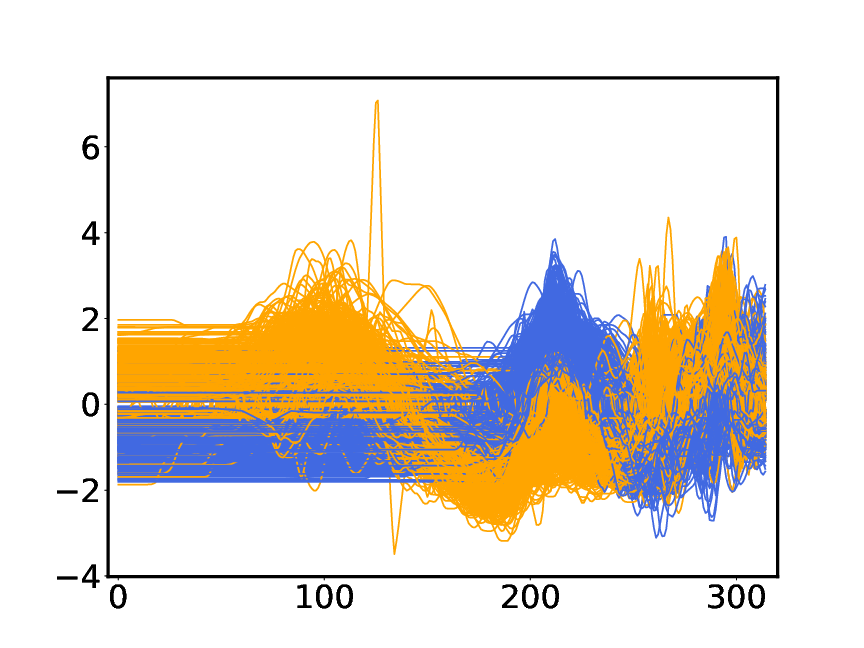}
      \end{subfigure}

        \begin{subfigure}[b]{0.22\textwidth}
          \includegraphics[width=\linewidth]{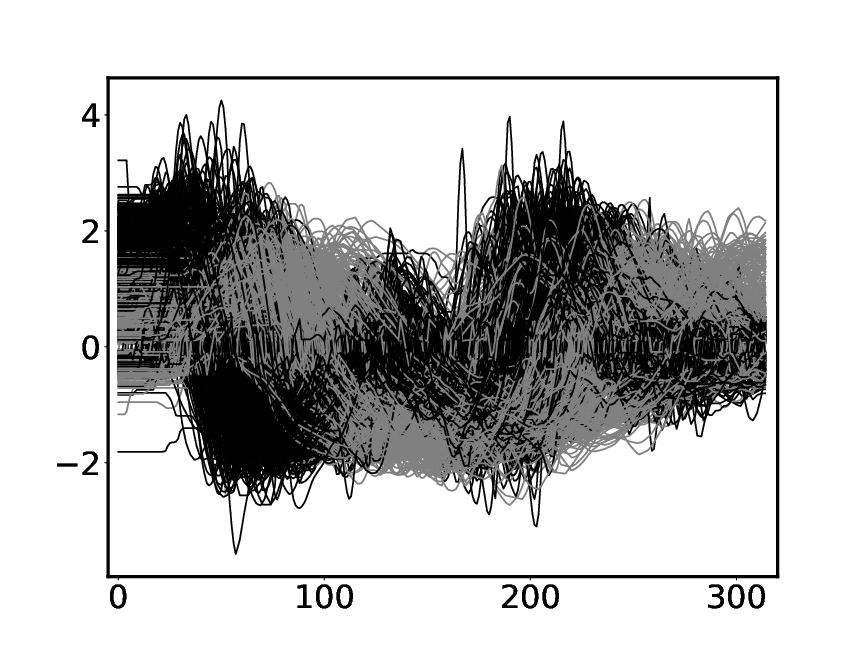}
          \caption{}
      \end{subfigure}
       \begin{subfigure}[b]{0.22\textwidth}
           \includegraphics[width=\linewidth]{figs/gamma_functions_WaveAll.png}
           \caption{}
      \end{subfigure}
       \begin{subfigure}[b]{0.22\textwidth}
          \includegraphics[width=\linewidth]{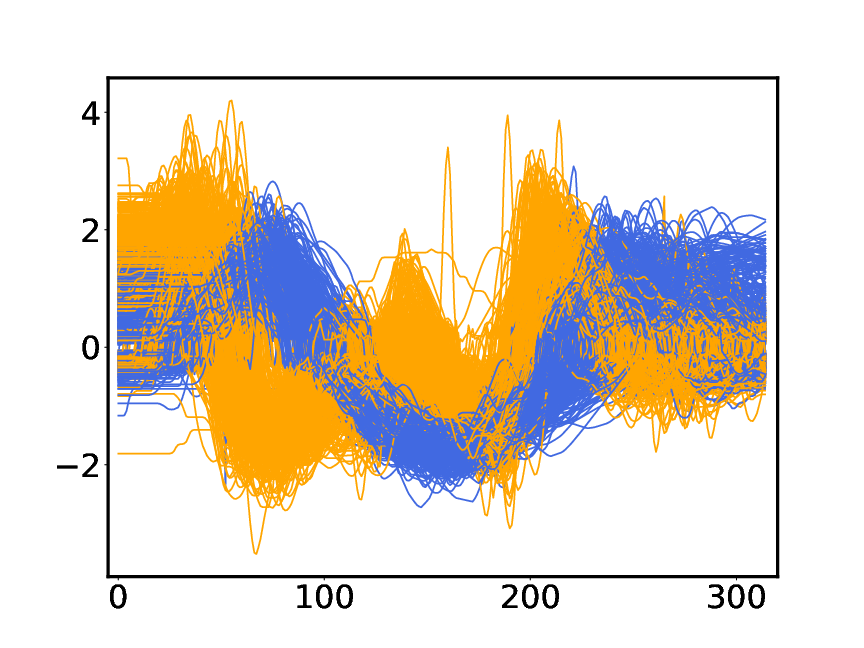}
          \caption{}
      \end{subfigure}
       \begin{subfigure}[b]{0.22\textwidth}
          \includegraphics[width=\linewidth]{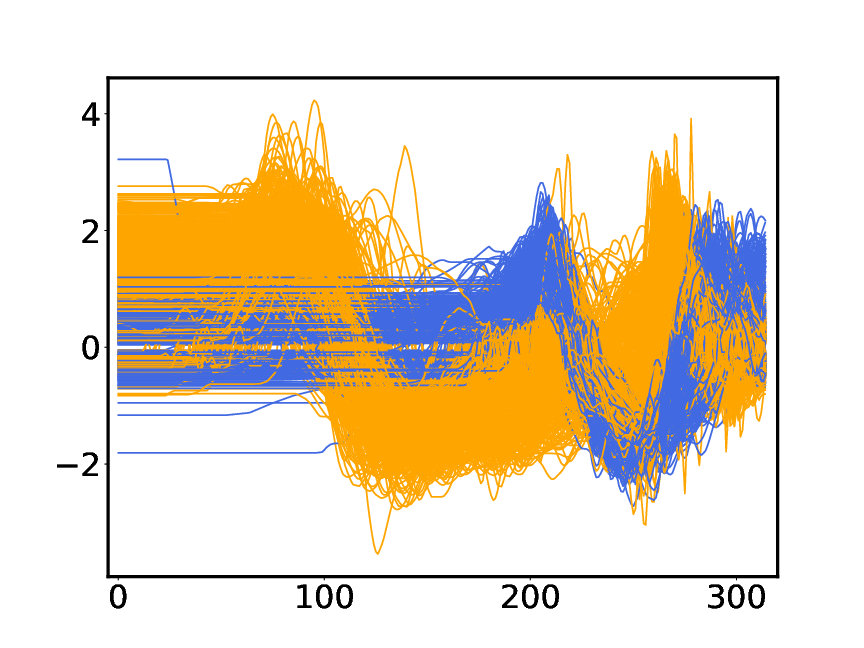}
          \caption{}
      \end{subfigure}

        \caption{(a) Raw data in \textbf{Wave} ($d=3$) dataset with two ground-truth groups indicated by black and grey. (b) Learned warping functions $\gamma(t)$'s by \textit{Ours}. (c) Alignment by \textit{Ours}. (d) Alignment by \textit{Ours (N-ODE $\rightarrow$ SrvfRegNet)}. Colored curves in (b)-(d) indicate the clustering results.}
        \label{fig:res_wave_all}
\end{figure*}

\begin{figure*}[ht]
      \centering

    \begin{subfigure}[b]{0.22\textwidth}
          \includegraphics[width=\linewidth]{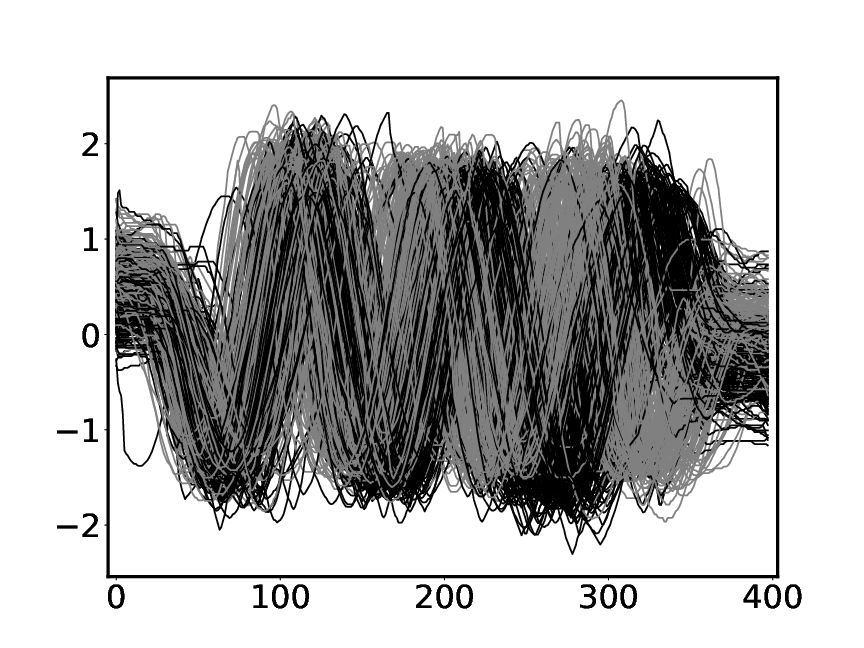}
          \caption{}
      \end{subfigure}
       \begin{subfigure}[b]{0.22\textwidth}
           \includegraphics[width=\linewidth]{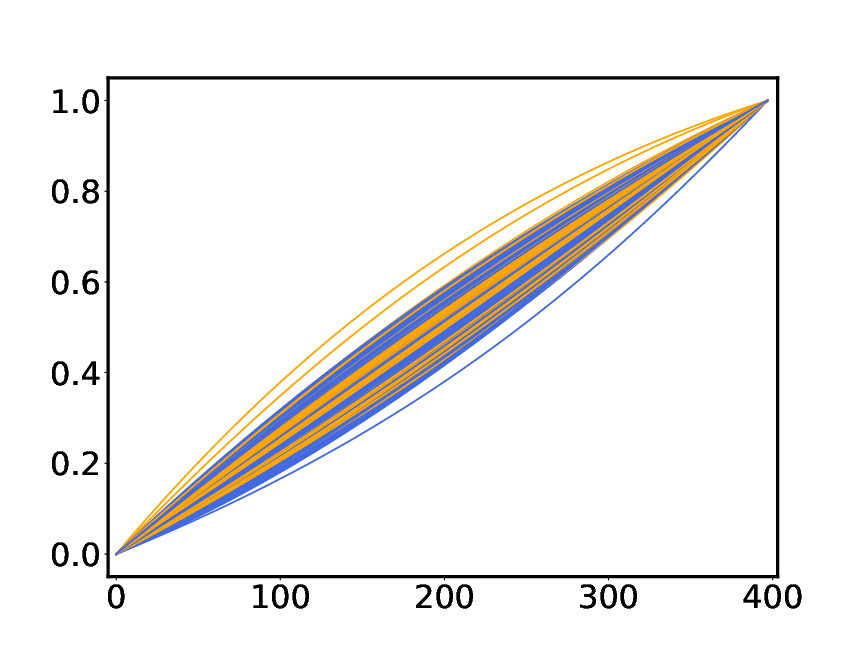}
           \caption{}
      \end{subfigure}
       \begin{subfigure}[b]{0.22\textwidth}
          \includegraphics[width=\linewidth]{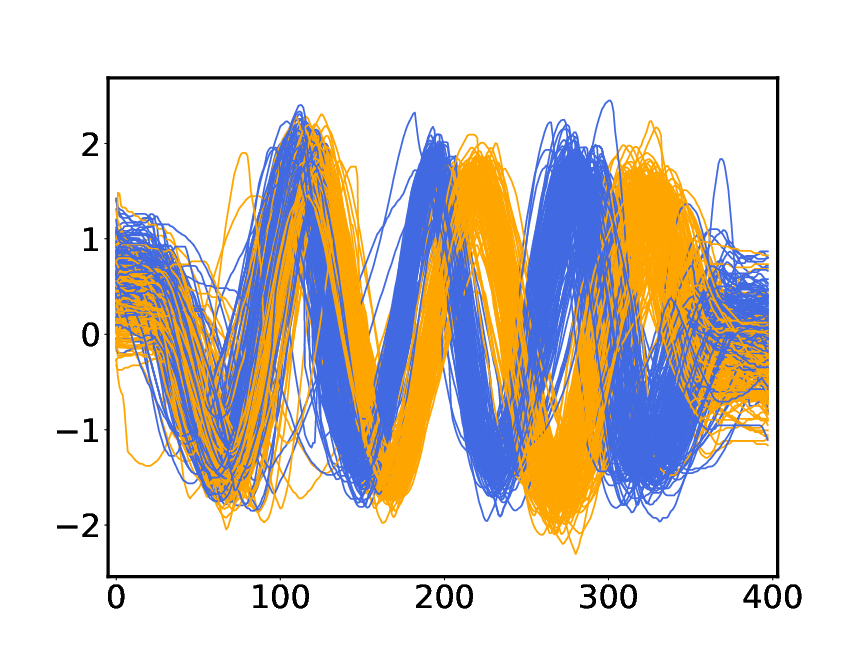}
          \caption{}
      \end{subfigure}
       \begin{subfigure}[b]{0.22\textwidth}
          \includegraphics[width=\linewidth]{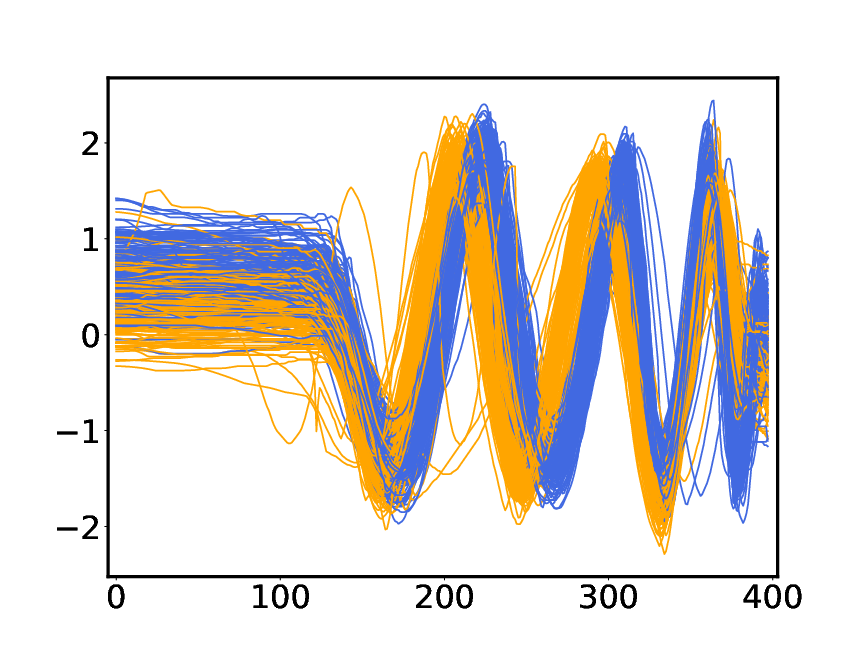}
          \caption{}
      \end{subfigure}

        \caption{(a) Raw data in \textbf{Symbol} (2 classes) dataset with two ground-truth groups indicated by black and grey. (b) Learned warping functions $\gamma(t)$'s by \textit{Ours}. (c) Alignment by \textit{Ours}. (d) Alignment by \textit{Ours (N-ODE $\rightarrow$ SrvfRegNet)}. Colored curves in (b)-(d) indicate the clustering results.}
        \label{fig:res_symbols2}
\end{figure*}

\begin{figure*}[ht]
      \centering

        \begin{subfigure}[b]{0.22\textwidth}
          \includegraphics[width=\linewidth]{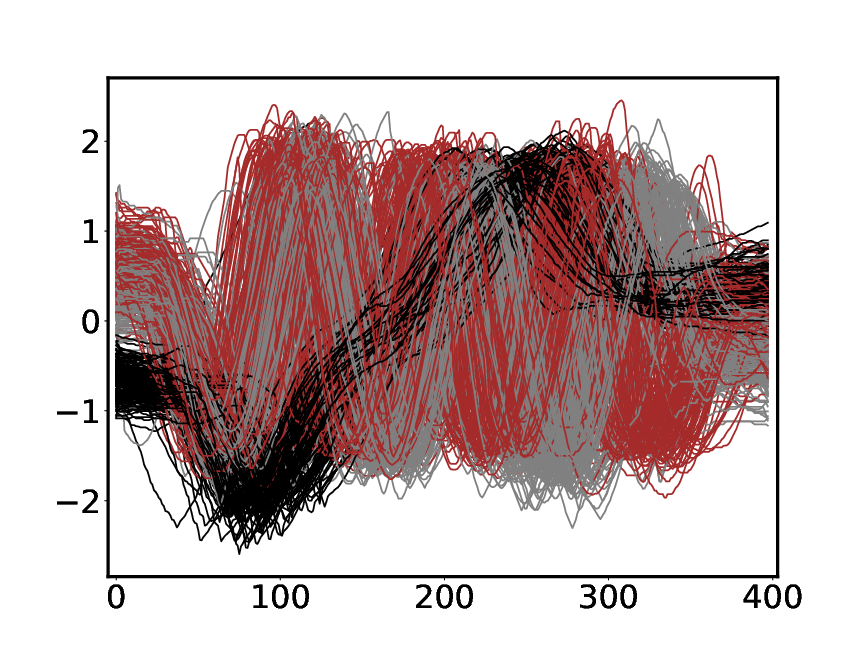}
          \caption{}
      \end{subfigure}
       \begin{subfigure}[b]{0.22\textwidth}
           \includegraphics[width=\linewidth]{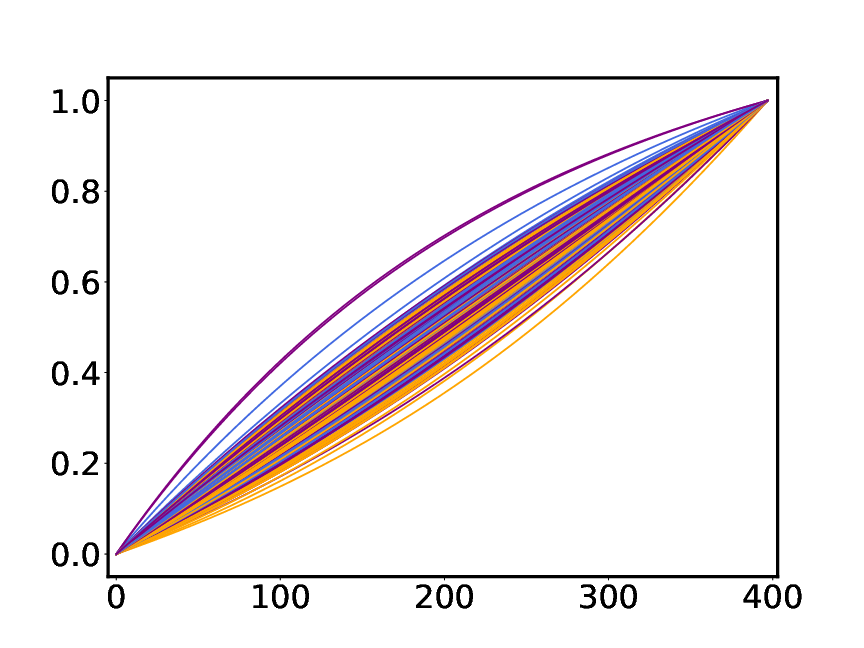}
           \caption{}
      \end{subfigure}
       \begin{subfigure}[b]{0.22\textwidth}
          \includegraphics[width=\linewidth]{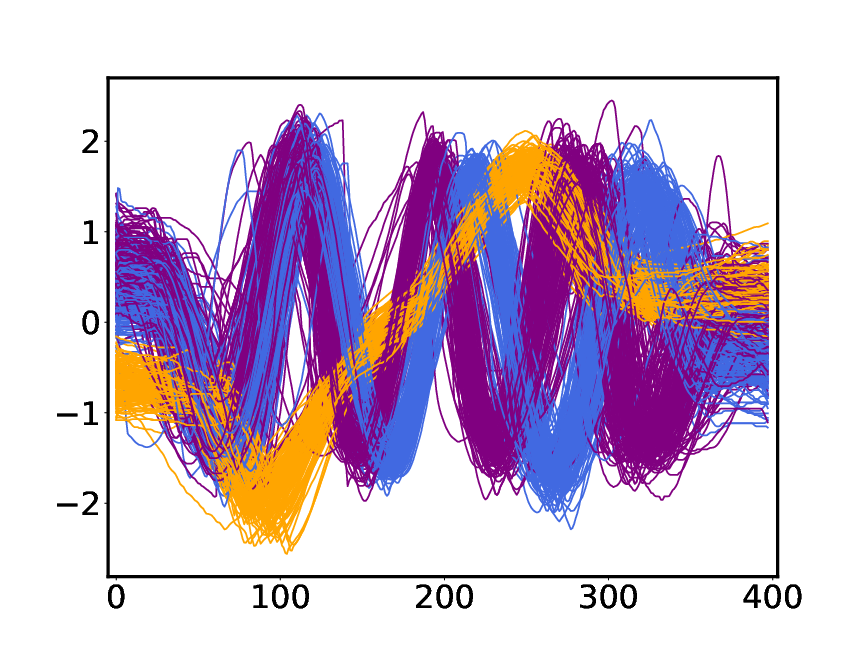}
          \caption{}
      \end{subfigure}
       \begin{subfigure}[b]{0.22\textwidth}
          \includegraphics[width=\linewidth]{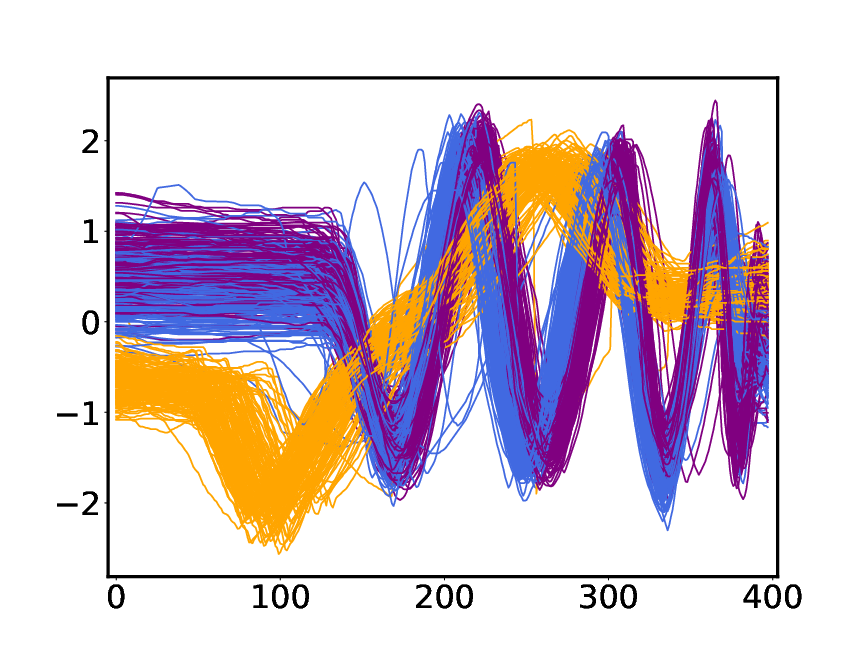}
          \caption{}
      \end{subfigure}

      \caption{(a) Raw data in \textbf{Symbol} (3 classes) dataset with three ground-truth groups indicated by red, black and grey. (b) Learned warping functions $\gamma(t)$'s by \textit{Ours}. (c) Alignment by \textit{Ours}. (d) Alignment by \textit{Ours (N-ODE $\rightarrow$ SrvfRegNet)}. Colored curves in (b)-(d) indicate the clustering results.}
        \label{fig:res_symbols3}
\end{figure*}

\begin{table*}[htbp]
    \centering
    \caption{P-values from paired t-tests (10 runs) comparing full \textbf{NeuralFLoC} against ablated variants. \textbf{Bold} indicates significance ($p < 0.05$).}
    \label{table:ablation}
    \scalebox{0.6}{
    \setlength{\tabcolsep}{3pt}
    \begin{tabular}[b]{@{}l*{15}{c}@{}}
        \toprule
        \multirow{2}{*}{\textbf{Hypothesis Tests} ($p$-value)}  & \multicolumn{3}{c}{\textbf{Shapes} (2 classes)} & \multicolumn{3}{c}{\textbf{Wave} ($d=1$)} & \multicolumn{3}{c}{\textbf{Wave} ($d=3$)} & \multicolumn{3}{c}{\textbf{Symbols} (2 classes)} & \multicolumn{3}{c}{\textbf{Symbols} (3 classes)} \\
        \cmidrule(lr){2-4} \cmidrule(lr){5-7} \cmidrule(lr){8-10} \cmidrule(lr){11-13}  \cmidrule(lr){14-16}
         & ATV & ACC & NMI & ATV & ACC & NMI & ATV & ACC & NMI & ATV & ACC & NMI &ATV & ACC & NMI\\
        \midrule
         Full vs.   (N-ODE $\to$ SrvfRegNet) & \textbf{0.0404}&\textbf{0.0007} & \textbf{0.0015}& \textbf{0.0000}  & \textbf{0.0000} & \textbf{0.0000} & \textbf{0.0010} & \textbf{0.0166} & \textbf{0.0043} &  0.3202  & \textbf{0.0460}  & 0.0543 &  0.1919  & \textbf{0.0478} & 0.0758   \\
        Full vs. w/o Reg& - & \textbf{0.0000} & \textbf{0.0000} & - & \textbf{0.0000} & \textbf{0.0000} & - & \textbf{0.0000} & \textbf{0.0000} & - &\textbf{0.0000} & \textbf{0.0000} & - & \textbf{0.0000} & \textbf{0.0000} \\
        Full vs. w/o Clu& \textbf{0.0015} & - & - & \textbf{0.0039} & - & - & \textbf{0.0000} & - & - & \textbf{0.0000} & - & - & \textbf{0.0000} & - & - \\
        \bottomrule
    \end{tabular}
    }
\end{table*}

\begin{table*}[htbp]
  \centering
  \caption{Performance on a large-scale dataset. Results for registration and clustering on the 2-class \textbf{Symbols} dataset (200$\times$). \textbf{Bold} denotes best performance; the K-Graph baseline fails at this scale.}
  \resizebox{0.5\textwidth}{!}{
  \begin{tabular}{c|ccc}
    \toprule
      \textbf{Model} & ATV & ACC & NMI\\
    \midrule
    SrvfRegNet + FPCA & 7.5 & 0.743 & 0.186 \\
    SrvfRegNet + BSpline & 7.5 & 0.648 & 0.062 \\
    SrvfRegNet + FAE & 7.5 & 0.502 & 0.000 \\
    SrvfRegNet + RandomNet & 7.5 & 0.744 & 0.188 \\
    SrvfRegNet + K-Graph & 7.5 & - & - \\
    \textit{Ours (N-ODE$\to$SrvfRegNet)} & 3.3 & 0.919 & 0.625\\
    \textit{Ours} & \textbf{3.2} & \textbf{0.942} & \textbf{0.683} \\
    \bottomrule
  \end{tabular}
  }
  \label{tab:scale}
\end{table*}
\end{document}